\algnewcommand\INPUT{\item[\textbf{Input:}]}%
\algnewcommand\OUTPUT{\item[\textbf{Output:}]}%
\newcommand\blfootnote[1]{%
  \begingroup
  \renewcommand\thefootnote{}\footnote{#1}%
  \addtocounter{footnote}{-1}%
  \endgroup
}
\def\xfoo#1^#2\relax#3\valign{%
\mathbf{#1}\ifx\valign#2\valign\else^{\mathbf{#2}}\fi}
\def\KL{{\rm KL}}
\def\JS{{\rm JS}}
\def\Zm{S} %Z^m
\def \Pzt{P_{S|\tau}} %P^m_{Z|\tau}
\def \PzTt{P_{S|T=\tau}} %P^m_{Z|T=\tau}
\def \PzT{P_{S|T}} %P^m_{Z|T}
\def \PZmTi{P_{\Zm_i|\tau}} %P_{Z^m_i|\tau}
\def \PZMTi{P_{\Zm_i|T}} %P_{Z^m_i|T}
\def \PZmti{P_{\Zm_i|\tau_i}} %P_{Z^m_i|\tau_i}
\def \PZMti{P_{\Zm_i|T_i}} %P_{Z^m_i|T_i}
\def\Xbf{\mathbf{X}}
\def\Ybf{\mathbf{Y}}
\def\mset{{\Zm_{1:N}}}
\def\avg{{\rm avg}}
\def \Nscr{\mathcal{N}}
\def \Wscr{\mathscr{W}}
\newtheorem{theorem}{Theorem}[section]
\newtheorem{assumption}{Assumption}[section]
\newtheorem{lemma}{Lemma}[section]
\newtheorem{corollary}[theorem]{Corollary}
\newtheorem{definition}{Definition}[section]
\def\bkE{{\rm I\kern-.17em E}}
\def\bk1{{\rm 1\kern-.17em l}}
\def\bkD{{\rm I\kern-.17em D}}
\def\bkR{{\rm I\kern-.17em R}}
\def\bkP{{\rm I\kern-.17em P}}
\def\bkZ{{\bf{Z}}}
\def\bkE{{\rm I\kern-.17em E}}
\def\bk1{{\rm 1\kern-.17em l}}
\def\bkD{{\rm I\kern-.17em D}}
\def\bkR{{\rm I\kern-.17em R}}
\def\bkP{{\rm I\kern-.17em P}}
\newcommand{\pushright}[1]{\ifmeasuring@#1\else\omit\hfill$\displaystyle#1$\fi\ignorespaces}
\newcommand{\pushleft}[1]{\ifmeasuring@#1\else\omit$\displaystyle#1$\hfill\fi\ignorespaces}
\def\bkZ{{\bf{Z}}}
\def\b12{(\beta_1,\beta_2)}
\newenvironment{proofarg}[1][]{\noindent\hspace{2em}{\itshape Proof #1: }}{\hspace*{\fill}~\qed\par\endtrivlist\unskip}
\newcounter{example}
\renewcommand{\theexample}{\thesection.\arabic{example}}
\newenvironment{examplec}[1][]{\refstepcounter{example}
\par\medskip \noindent%
   \textbf{Example~\theexample. #1} \rmfamily}{\hfill $\square$   \hspace{-4.5pt} \vspace{6pt}}
\newcounter{remark}
\renewcommand{\theremark}{\thesection.\arabic{remark}}
\def\Hscr{\mathscr{H}}
\def\Ebb{\mathbb{E}}
\newlength{\noteWidth}
\long\def\notes#1{\ifinner
{\tiny #1}
\else
\marginpar{\parbox[t]{\noteWidth}{\raggedright\tiny #1}}
\fi\typeout{#1}}
 \def\notes#1{\typeout{read notes: #1}} %uncomment for final version
\newcommand{\ie}{i.e.\@\xspace} %%% i.e.,
\newcommand{\etal}{et al.\@\xspace} %%% e.g., Gill \etal (1986)
\newcommand{\Real}{\ensuremath{\mathbb{R}}}
\def\Ebb{\mathbb{E}}
\def\exp{\mathop{\hbox{\rm exp}}}
\def\spose#1{\hbox to 0pt{#1\hss}}
\def\text #1{\hbox{\quad#1\quad}}
\def\nthinsp{\mskip -2   mu}
\def\superstar{^{\raise 0.5pt\hbox{$\nthinsp *$}}}
\def\SUPERSTAR{^{\raise 0.5pt\hbox{$*$}}}
\def\lamstarT {\lambda^{\raise 0.5pt\hbox{$\nthinsp *$}T}}
\def\Lscr{{\cal L}}
\def\Pscr{{\cal P}}
\def\Uscr{{\cal U}}
\def\Wscr{{\cal W}}
\def\Nscr{{\cal N}}
\def\Zscr{{\cal Z}}
\def\non{\nonumber}
\let\forallnew\forall
\renewcommand{\forall}{\forallnew\ }
\let\forall\forallnew
		\def\bkE{{\rm I\kern-.17em E}}
		\def\bk1{{\rm 1\kern-.17em l}}
		\def\bkD{{\rm I\kern-.17em D}}
		\def\bkR{{\rm I\kern-.17em R}}
		\def\bkP{{\rm I\kern-.17em P}}
		\def\bkY{{\bf \kern-.17em Y}}
		\def\bkZ{{\bf \kern-.17em Z}}
		\def\bkC{{\bf  \kern-.17em C}}
		\def\bsp{\begin{split}}
		\def\beq{\begin{eqnarray}}
		\def\bal{\begin{align*}}
		\def\bc{\begin{center}}
		\def\be{\begin{enumerate}}
		\def\bi{\begin{itemize}}
		\def\bs{\begin{small}}
		\def\bS{\begin{slide}}
		\def\ec{\end{center}}
		\def\ee{\end{enumerate}}
		\def\ei{\end{itemize}}
		\def\es{\end{small}}
		\def\eS{\end{slide}}
		\def\eeq{\end{eqnarray}}
		\def\eal{\end{align*}}
		\def\esp{\end{split}}
		\def\qed{ \vrule height7.5pt width7.5pt depth0pt}  %width4.17pt depth0pt} 
	\def\cp2problem#1#2#3#4{\fbox
		 {\begin{tabular*}{0.9\textwidth}
			{@{}l@{\extracolsep{\fill}}l@{\extracolsep{6pt}}l@{\extracolsep{\fill}}c@{}}
				#1 & & $#4 $ 
			\end{tabular*}}}
		\def\bkE{{\rm I\kern-.17em E}}
		\def\bk1{{\rm 1\kern-.17em l}}
		\def\bkD{{\rm I\kern-.17em D}}
		\def\bkR{{\rm I\kern-.17em R}}
		\def\bkP{{\rm I\kern-.17em P}}
		\def\bkZ{{\bf{Z}}}
\newcommand {\beeq}[1]{\begin{equation}\label{#1}}
\newcommand {\eeeq}{\end{equation}}
\newcommand {\bea}{\begin{eqnarray}}
\newcommand {\eea}{\end{eqnarray}}
\def\texitem#1{\par\smallskip\noindent\hangindent 25pt
               \hbox to 25pt {\hss #1 ~}\ignorespaces}
\def\bsp{\begin{split}}
		\def\beq{\begin{eqnarray}}
		\def\bal{\begin{align*}}
		\def\bc{\begin{center}}
		\def\be{\begin{enumerate}}
		\def\bi{\begin{itemize}}
		\def\bs{\begin{small}}
		\def\bS{\begin{slide}}
		\def\ec{\end{center}}
		\def\ee{\end{enumerate}}
		\def\ei{\end{itemize}}
		\def\es{\end{small}}
		\def\eS{\end{slide}}
		\def\eeq{\end{eqnarray}}
		\def\eal{\end{align*}}
		\def\esp{\end{split}}
		\def\qed{ \vrule height7.5pt width7.5pt depth0pt}  %width4.17pt depth0pt} 
\newenvironment{proof}[1][]{{\noindent \textit{ Proof}: }}{\hfill \qed \vspace{3pt}\\ }
\def\Nscr{{\cal N}}
\author{\IEEEauthorblockN{Sharu Theresa Jose and Osvaldo Simeone}}
\title{An Information-Theoretic Analysis of the Impact of Task Similarity on Meta-Learning}
\begin{document}
\maketitle
\begin{abstract}
Meta-learning aims at optimizing the hyperparameters of a model class or training algorithm from the observation of data from a number of related tasks. Following the setting of Baxter\cite{baxter2000model}, the tasks are assumed to belong to the same \textit{task environment}, which is defined by a distribution over the space of tasks and by per-task data distributions. The statistical properties of the task environment thus dictate the similarity of the tasks. The goal of the meta-learner is to ensure that the hyperparameters obtain a small loss when applied for training of a new task sampled from the task environment.
%infer the hyperparameter vector that minimizes the \textit{meta-population loss}. This is the loss on the base-learner when trained on the data for a new task sampled from the task environment. 
The difference between the resulting average loss, known as meta-population loss, and the corresponding empirical loss measured on the available data from related tasks, known as \textit{ meta-generalization gap}, is a measure of the generalization capability of the meta-learner. In this paper, we present novel information-theoretic bounds on the \textit{average absolute value of the meta-generalization gap}. Unlike prior work \cite{jose2020information}, our bounds explicitly capture the impact of task relatedness, the number of tasks, and the number of data samples per task on the meta-generalization gap. Task similarity is gauged via the Kullback-Leibler (KL) and Jensen-Shannon (JS) divergences. We illustrate the proposed bounds on the example of ridge regression with meta-learned bias.
\end{abstract}
\blfootnote{The authors are with King's Communications, Learning, and Information Processing (KCLIP) lab at the Department of Engineering of King’s College London, UK (emails: sharu.jose@kcl.ac.uk, osvaldo.simeone@kcl.ac.uk).
The authors have received funding from the European Research Council
(ERC) under the European Union’s Horizon 2020 Research and Innovation
Programme (Grant Agreement No. 725731).}
\vspace{-0.5cm}
\section{Introduction}
%Any effective learning process requires certain prior assumptions on the task at hand.  These include assumptions on the hypothesis space (or model class), or on the hyperparameters of a learning algorithm -- initialization or learning rate, and are collectively called the\textit{ inductive bias}. In conventional learning, the inductive bias is fixed \textit{a priori} leveraging the domain expertise gained through previous experiences on  related tasks. \textit{Meta-learning} or \textit{learning-to-learn} aims to automatically infer the inductive bias by observing a number of related similar tasks, so that a new, previously unseen task can be learned using fewer number of data samples \cite{schmidhuber1987evolutionary}, \cite{thrun1998learning}. 

%In this work, we consider the problem of meta-learning the hyperparameters of a learning algorithm.
Conventional learning optimizes model parameters using a training algorithm, while meta-learning 
 optimizes the hyperparameters of a training algorithm. A meta-learner has access to data from a class of tasks, and its goal is to ensure that the resulting training algorithm perform well on any new tasks from the same class. To elaborate, consider 
 %refers to the process of inferring the hyperparameters via the observation of data from related tasks. We follow the standard approach adopted by information-theoretic studies of conventional machine learning methods by fixing
  an arbitrary learning algorithm, referred to as base-learner, as a stochastic mapping $P_{W|\Zm,U}$ from the input training data set $\Zm$ to the output model parameter $W$ for a given hyperparameter vector $U$. For example, the base-learner may be a stochastic gradient descent (SGD) algorithm with the hyperparameter vector $U$ defining the initialization \cite{finn2017model} or the learning rate \cite{li2017meta}.
For a fixed base-learner and a fixed meta-learner (to be formallly defined below), we ask: \textit{Given the level of similarity of the tasks in a given class, how many tasks and how much data per task should be observed to guarantee that the target average population loss for new tasks can be well approximated using the available meta-training data?}
% Prior work in \cite{jose2020informationtheoretic} has provided information-theoretic guarantees that do not gauge the role of the similarity among tasks. In this work, we instead derive for the first time information-theoretic bounds that explicitly quantify the impact of task relatedness.

\begin{figure}[t]
 \centering 
   \includegraphics[scale=0.3,trim=1.2in 2.7in 0.3in 0.55in,clip=true]{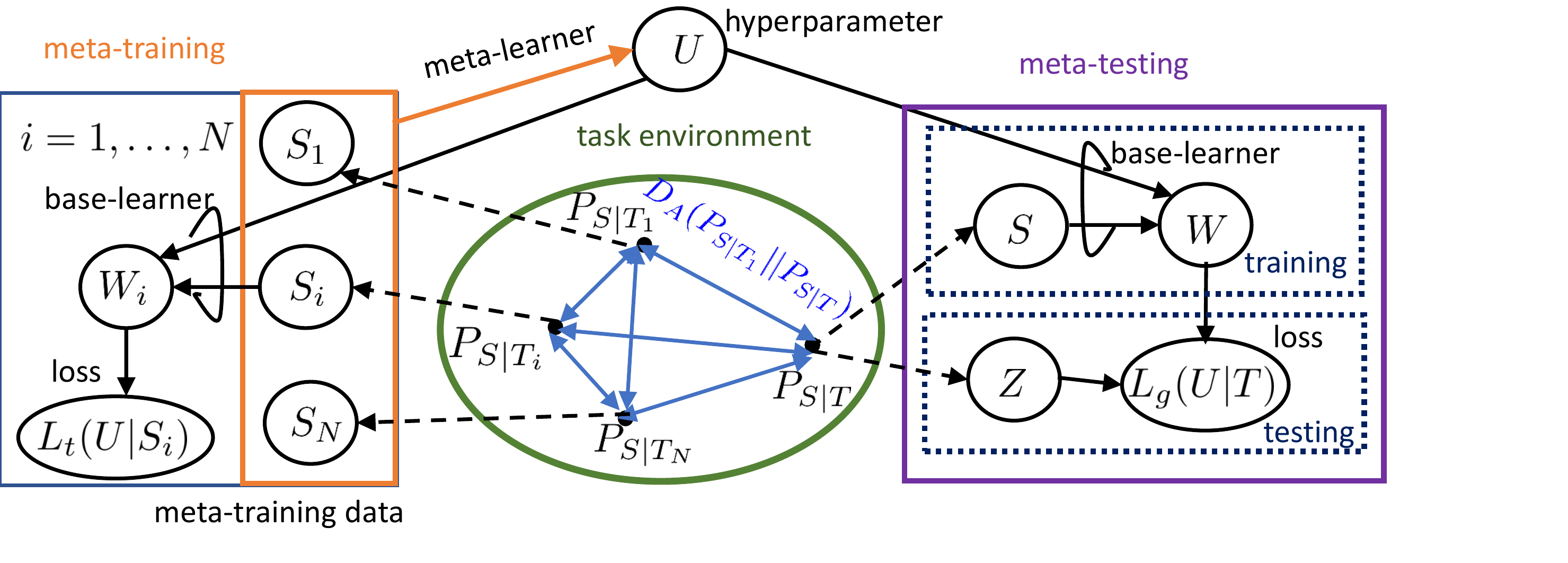} 
   \caption{Overview of the meta-learning problem setup. } \label{fig:overview}
   \vspace{-0.7cm}
  \end{figure}  
  
As illustrated in Figure~\ref{fig:overview}, a meta-learner observes data sets $\mset=(\Zm_1,\hdots,\Zm_N)$ from $N$ tasks $T_{1:N}=(T_1,\hdots,T_N)$, generated according to their respective data distributions. Based on the meta-training set $\mset$, the meta-learner determines a vector of hyperparameter $U$. Accordingly, the meta-learner is defined as a stochastic mapping $P_{U|\mset}$ from the input meta-training set to the output space of hyperaparameters.  The performance of a hyperparameter $U$ is evaluated in terms of 
 % The meta-learner is defined as a stochastic mapping $P_{U|\mset}$ from the input meta-training set $\mset$ to the output space $\Uscr$ of hyperparameters. 
the \textit{meta-population loss}, $\Lscr_g(U|T)$, which is the average loss of the base-learner $P_{W|S,U}$ when applied on the training data set $\Zm$ of a new, \textit{meta-test}, task $T$. However, the meta-learner does not have access to the data distribution of the new task, but only to the meta-training set $\mset$. Based on this set, the meta-learner can evaluate the empirical \textit{meta-training loss}, $\Lscr_t(U|\mset)$, obtained with hyperparameter $U$. The difference between the meta-population loss and the meta-training loss, known as \textit{meta-generalization gap}, $\Delta \Lscr(U|T,\mset)=\Lscr_g(U|T)-\Lscr_t(U|\mset)$,  measures how well the performance of the meta-learner on the meta-training set reflects the meta-population loss. 

The main goal of this work is to relate the number of tasks, $N$, and data points per task, $m$, to the average meta-generalization gap for any arbitrary meta-learner $P_{U|\mset}$ and base-learner $P_{W|S,U}$.
  Following the setting of Baxter \cite{baxter2000model}, the tasks are assumed to belong to a \textit{task environment}, which defines a task probability distribution $P_T$ on the space of tasks $\mathcal{T}$, where each task $T \in \mathcal{T}$ is associated with a data distribution $P_{Z|T}$. The statistical properties of the task environment thus dictate the similarity of the tasks in the class of interest. 
  %task relatedness in meta-learning and thereby the generalization capability of the meta-learner.
   Intuitively, if the average ``distance'' between data distributions of any two tasks in the task environment is small, the meta-learner should be able to learn a suitable shared hyperparameter $U$ by observing fewer tasks $N$. In line with this observation, 
the main contribution of the present work is a novel information-theoretic upper bound 
on the average meta-generalization gap
 that explicitly depends on measures of task similarity within the the task environment.

\subsection{Related Work}
While information-theoretic upper bounds on the generalization gap for conventional learning have been extensively studied \cite{russo2016controlling,xu2017information,bu2019tightening,lopez2018generalization ,negrea2019information,wang2019information}, there has been limited work on similar bounds for meta-learning. The recent works \cite{jose2020information} and \cite{rezazadeh2020conditional} extend the individual sample mutual information (ISMI) bound of Bu \etal \cite{bu2019tightening} and the conditional mutual information based bound of Steinke \etal \cite{steinke2020reasoning} to meta-learning, respectively. Another line of work includes probably approximately correct (PAC) bounds  based on algorithmic stability \cite{maurer2005algorithmic}, and PAC-Bayesian bounds \cite{pentina2014pac}, \cite{amit2018meta},\cite{rothfuss2020pacoh}. These bounds hold with high probability over meta-training set and tasks, and they are not directly comparable to \cite{jose2020information} and \cite{rezazadeh2020conditional}.  None of the above works explicitly captures the impact of task relatedness in the meta-learning environment. The similarity among tasks is instead well-accounted for in studies of domain adaptation or transfer learning via various measures of divergence, including $\Hscr$-divergence \cite{ben2007analysis}, integral probability metric \cite{zhang2012generalization}, Kullback-Leibler (KL) divergence \cite{wu2020information},\cite{jose2020transfer} and Jensen-Shannon (JS) divergence \cite{jose2020informationtheoretic}. Unlike meta-learning, in transfer learning, the target task is fixed, and therefore the performance bounds for transfer learning do not apply to meta-learning.
\subsection{Main Contributions}
In this work, based on KL divergence and JS divergence-based measures of similarity  between tasks from task environment, we present  novel information-theoretic upper bounds on the average of the absolute value of the meta-generalization gap. The bounds explicitly capture the relationship between the meta-training data set size, the tasks' similarity, and the meta-generalization gap. The derived bounds are illustrated on two numerical examples.
\vspace{-0.3cm}
\section{Problem Definition}\label{sec:pblmdefinition}
In this section, we give a formal definition of the problem of interest by introducing the operations of the base-learner and of the meta-learner, and by defining the meta-generalization gap and measures of task relatedness.
\vspace{-0.3cm}
\subsection{Base-Learner}
Consider a task $\tau \in \mathcal{T}$ with its associated data distribution $P_{Z|T=\tau}\in \Pscr(\Zscr)$\footnote{We use $\Pscr(\cdot)$ to denote the set of all probability distributions on `$\cdot$'.} in the space of data samples $\Zscr$. A base-learner observes a data set $\Zm=(Z_1, \hdots,Z_m) \sim \PzTt$ of $m$  samples drawn i.i.d from the task-specific data distribution $P_{Z|T=\tau}$. Based solely on $\Zm$, without knowledge of the task $\tau$ and of the data distribution $P_{Z|T=\tau}$, the goal of the base-learner is to infer a model parameter $W \in \Wscr$ such that it generalizes well on a test data point $Z \sim P_{Z|T=\tau}$ drawn independent of $\Zm$. The performance of a model parameter $w \in \Wscr$ on a data sample $z \in \Zscr$ is measured by a loss function $l:\Wscr \times \Zscr \rightarrow \Real_{+}$.

We define the base-learner as a stochastic mapping $P_{W|\Zm,u} \in \Pscr(\Wscr)$ from the input training set $\Zm$ to the output space of model parameters $\Wscr$ for a given hyperparameter $u$. While the true goal of the base-learner is to minimize the \textit{population loss},
\begin{align}
L_g(w|\tau)=\Ebb_{P_{Z|T=\tau}}[l(w,Z)], \label{eq:genloss_base}
\end{align} which is the average loss over a test data $Z \sim P_{Z|T}$, this is not computable since the data distribution $P_{Z|T=\tau}$ is unknown. The base-learner evaluates instead the empirical \textit{training loss} 
\begin{align}
L_t(w|\Zm)= \frac{1}{m} \sum_{j=1}^m l(w,Z_j). 
\end{align} The difference between the population loss and the training loss, $\Delta L(w|\Zm,\tau)=L_g(w|\tau)-L_t(w|\Zm)$, is known as the \textit{generalization gap}, and has been widely studied including in the information-theoretic literature \cite{xu2017information,bu2019tightening},\cite{wu2020information}, \cite{jose2020informationtheoretic}.
\subsection{Meta-Learner}
As seen in Figure~\ref{fig:overview}, the goal of the meta-learner is to infer the hyperparameter $u \in \Uscr$ of the base-learner based on data from a number of related tasks from the task environment. Let $\mathcal{T}$ denote the space of tasks. The task environment is defined by a distribution $P_T$ on the set of tasks $\mathcal{T}$ and by the per-task data distributions $\{P_{Z|T=\tau}\}_{\tau \in \mathcal{T}}$. A meta-learner observes a meta-training data set $\mset=(\Zm_1,\hdots, \Zm_N)$ of $N$ data sets. Each $i$th subset $\Zm_i$ is obtained independently by first selecting a task $T_i \sim P_T$ and then generating the dataset $\Zm_i \sim P_{\Zm|T=T_i}$, where $P_{\Zm|T=T_i}=P^{\otimes m}_{Z|T=T_i}$. The meta-learner uses the meta-training data $\mset$ to infer a hyperparameter $u \in \Uscr$. Accordingly, the meta-learner is defined as a stochastic mapping $P_{U|\mset} \in \Pscr(\Uscr)$ from the input meta-training set to the output space $\Uscr$ of hyperparameters. Note that no knowledge of tasks $T_1, \hdots, T_N$ and of the corresponding data distributions $P_{Z|T=T_1},\hdots P_{Z|T=T_N}$ is available at the meta-learner.

The goal of the meta-learner is to ensure that the base-learner $P_{W|\Zm,u}$ with the inferred hyperparameter $u$ performs well on any new, a priori unknown, \textit{meta-test task} $T \sim P_T$ that is independently drawn from the task set $\mathcal{T}$.  
Accordingly,
for any meta-test task $T$, and given hyperparameter vector $u$, the criterion of interest is the
 \textit{meta-population loss}, \ie the average generalization loss
\begin{align}
\Lscr_g(u|T)=\Ebb_{\PzT}\Ebb_{P_{W|\Zm,u}}[L_g(W|T)],
\end{align}
where the average is computed with respect to the training data $\Zm \sim \PzT$ of the meta-test task $T$ and $L_g(W|T)$ is defined as in \eqref{eq:genloss_base}.

 To summarize, the meta-learner $P_{U|\mset}$ uses the meta-training dataset $S_{1:N}$ to obtain hyperparameter $U$. Then, the resulting base-learner $P_{W|S,U}$ uses 
the data $\Zm$ from the meta-test task $T$ to obtain a model parameter $W$ that is tested on a new test point $Z \sim P_{Z|T}$. The meta-population loss corresponds to the loss incurred during meta-testing. 

 To estimate this quantity, the meta-learner evaluates the empirical \textit{meta-training loss} on the meta-training set $\mset$ from the $N$ meta-training tasks $T_{1:N}=(T_1,\hdots,T_N)$ as
\begin{align}
\Lscr_t(u|\mset)&=\frac{1}{N} \sum_{i=1}^N L_t(u|\Zm_i), \quad \mbox{where}
\\
L_t(u|\Zm_i)&=\Ebb_{P_{W|\Zm_i,u}}[L_t(W|\Zm_i)]\label{eq:per-tasktraining}
\end{align} is the average per-task training loss.
\subsection{Meta-Generalization Gap}
%In this paper, we consider the problem of meta-learning the hyperparameter vector $u \in \Uscr$ of a learning algorithm by observing a number of related tasks from the task set $\mathcal{T}$. Given a task $\tau \in \mathcal{T}$, the learning algorithm  The base-learner is defined as a stochastic mapping $P_{W|\Zm,u} \in \Pscr(\Wscr)$ from the input training set to the output space of model parameters $\Wscr$ for a given hyperparameter $u$.

%The meta-learner observes data from $N$ tasks $\tau_1,\hdots,\tau_N$  from the task space $\mathcal{T}$, which constitutes the meta-training data $\mset=(\Zm_1, \hdots, \Zm_{1:N})$. The meta-learner infers the hyperparameter $u$ which minimizes the meta-training loss

%The performance of the inferred hyperparameter $u$ is then evaluated on the data set corresponding to a new task $\tau$ from the task space $\mathcal{T}$, distinct from the observed learning tasks $\tau_1,\hdots,\tau_N$. The corresponding loss is the meta-generalization loss defined as
%\begin{align}
%\Lscr_g(u|\tau)=\Ebb_{P_{\Zm|\tau}}\Ebb_{P_{W|\Zm,u}}[L_g(w|\tau)]
%\end{align}
The meta-generalization gap for meta-test task $T$ is then defined as the difference
\begin{align}
\Delta \Lscr(u|\mset,T)=\Lscr_g(u|T)-\Lscr_t(u|\mset).
\end{align}
We are interested in studying the average absolute value of the meta-generalization gap, \ie,
\begin{align}
%\Delta |\Lscr^{\avg}|
|\overline{\Delta \Lscr}|^{\avg}=\Ebb_{P_T P_{T_{1:N}}} [| \overline{\Delta \Lscr}(T,T_{1:N})  | ], \label{eq:avgmetagengap}
\end{align}
where 
%$P_{T_{1:N}}P_{\mset|T_{1:N}}=\prod_{i=1}^N P_{T=T_i}P^m_{Z|T=T_i}$. 
we have defined as \begin{align}\overline{\Delta \Lscr}(T,T_{1:N})=\Ebb_{P_{\mset|T_{1:N}}P_{U|\mset}}[\Delta \Lscr(U|\mset,T)] \label{eq:metagengap_task}\end{align} the average meta-generalization gap for given meta-training tasks $T_{1:N}$ and meta-test task $T$.

   Prior works \cite{jose2020information}, \cite{rezazadeh2020conditional} have adopted the absolute value of the average of \eqref{eq:metagengap_task} over distributions $P_{T_{1:N}}$ and $P_T$, \ie, $|\overline{\Delta \Lscr}^{\avg}|=|\Ebb_{P_T,P_{T_{1:N}}}[\overline{\Delta \Lscr}(T,T_{1:N})]|$ as the performance metric of interest. The metric $|\overline{\Delta \Lscr}^{\avg}|$ ``mixes up'' the tasks by first averaging over meta-training and meta-testing tasks and then taking the absolute value. In contrast, the proposed metric $|\overline{\Delta \Lscr}|^{\avg}$ keeps the contribution of each selection of meta-training and meta-test tasks separate by averaging the respective absolute values. Accordingly, the asymptotic behaviour of $|\overline{\Delta \Lscr}|^{\avg}$ as $m,N \rightarrow \infty$ differs from that of $|\overline{\Delta \Lscr}^{\avg}|$: While $|\overline{\Delta \Lscr}^{\avg}|$ tends to zero by the law of large numbers, this does not generally hold true for $|\overline{\Delta \Lscr}|^{\avg}$. This reflects the important fact that the meta-training loss cannot provide an asymptotically accurate estimate of meta-test loss, which is evaluated on a priori unknown task. 
   %Furthermore, we will show that the analysis of ${\color{red}|\overline{\Delta \Lscr}|^{\avg}}$ reveals the role of task relatedness, while this appears not to be the case for $|\overline{\Delta \Lscr}^{\avg}|$ (see \cite{jose2020information},\cite{rezazadeh2020conditional} and Section~\ref{sec:example} for further discussion).
%  \begin{figure}[h!]
% \centering 
%   \includegraphics[scale=0.5,trim=3in 1.2in 0in 1.4in,clip=true]{expt5_comparisongaps.pdf} 
%   \caption{Comparison of average absolute meta-generalization gap $\Delta \Lscr^{\avg}$ and $|\Ebb_{P_T,P_{T_{1:N}}}[\Delta \Lscr^{\avg}(T,T_{1:N})]|$ as a function of $N$ and $m=0.75N$ for the example in Section~\ref{sec:example}. Other parameters are set as $\alpha=0.2,c=1,\bar{\mu}=1,\bar{\nu^2}=0.8$ and $\nu^2=0.8421$.}
%    \label{fig:exp5}
%   \vspace{-0.3cm}
%  \end{figure} 
\subsection{Measures of Task Relatedness}
%Standard formulations of meta-learning assume that the learning tasks belong to a task environment defined by a distribution $P_T$ on the space of tasks $\mathcal{T}$. We will assume $\mathcal{T}$  to be a discrete finite set. Conditioned on a task $T=\tau \in \mathcal{T}$, the data is generated i.i.d. from a data distribution $P_{Z|T=\tau}$. 
Given a divergence measure $D_A(p||q)$ between distributions $p$ and $q$, a natural measure of the difference between two tasks $\tau$ and $\tau' \in \mathcal{T}$ is the divergence $D_A(P_{Z|\tau}||P_{Z|\tau'})$ between the data distributions under the two tasks \cite{lucas2020theoretical}. We will specifically use the following definition of task relatedness of a task environment.

\begin{definition}[$\epsilon$-Related Task Environment] \label{def:KL}
A task environment $(P_T, \{P_{Z|T=\tau}\}_{\tau \in \mathcal{T}})$  is said to be $\epsilon$-related if we have the following inequality
\begin{align}
\Ebb_{P_T\cdot P_{T}}\Bigl[D_{A}(P_{S|T}||P_{S|T'})\Bigr] \leq \epsilon, \label{eq:KL}
\end{align}
where the tasks $T$ and $T'$ are independently drawn from the task distribution $P_T$. As a special case, if we have the inequality
\begin{align}
    D_{A}(P_{S|T=\tau}||P_{S|T=\tau'}) \leq \epsilon, \quad \mbox{for all} \hspace{0.1cm} \tau, \tau' \in \mathcal{T},
\end{align} the tasks belong to an $\epsilon$-related task environment. 
\end{definition}

When the KL or JS divergences are used, we will respectively use the terminology $\epsilon$-KL and $\epsilon$-JS related task environment. We recall that the JS divergence is defined as
\begin{align*}
D_{\JS}(p||q)=\frac{1}{2}\bigl[D_{\KL}(p||0.5(p+q))+D_{\KL}(q||0.5(p+q))\bigr].
\end{align*} Due to the tensorization properties of the KL divergence \cite{polyanskiy2014lecture}, inequality \eqref{eq:KL} can be equivalently formulated as $\Ebb_{P_T\cdot P_{T}}[D_{A}(P_{Z|T}||P_{Z|T'})] \leq \epsilon/m$ for $\epsilon$-KL related tasks, while a similar simplification does not apply to $\epsilon$-JS related tasks. The main potential advantage of the JS divergence over the KL divergence is that
the notion of $\epsilon$-KL related task environment, with $\epsilon < \infty$, applies only if all the per-task data distributions $P_{Z|T=\tau}$, for $\tau \in \mathcal{T}$, share the same support. In contrast, the JS divergence between any two distributions is always bounded as 
$D_{\JS}(p||q) \leq \log (2)$ \cite{polyanskiy2014lecture}, 
making the definition of $\epsilon$-JS related task environment with $\epsilon \in (0, \log(2)]$ always applicable.
\begin{lemma}\label{lem:relation}
An $\epsilon$-KL related task environment $(P_T, \{P_{Z|T=\tau}\}_{\tau \in \mathcal{T}})$ is also $\min\{ \log (2), \epsilon/2\}$-JS related. 
\end{lemma}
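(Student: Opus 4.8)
The plan is to handle the two terms in the minimum separately. The $\log(2)$ bound is immediate: since $D_{\JS}(p\|q)\le \log(2)$ holds pointwise for every pair of distributions (as recalled just before the statement), this bound is preserved under the expectation $\Ebb_{P_T\cdot P_T}[\cdot]$, so every task environment is trivially $\log(2)$-JS related. The work is therefore entirely in the $\epsilon/2$ bound.

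The crux is the pointwise inequality $D_{\JS}(p\|q)\le \tfrac14\bigl[D_{\KL}(p\|q)+D_{\KL}(q\|p)\bigr]$. To obtain it, I would write $m=\tfrac12(p+q)$ and bound each of the two KL terms that make up $D_{\JS}$ using the joint convexity of the KL divergence. Applying convexity with weight $\tfrac12$ to the pairs $(p,p)$ and $(p,q)$ gives $D_{\KL}(p\|m)=D_{\KL}\bigl(\tfrac{p+p}{2}\big\|\tfrac{p+q}{2}\bigr)\le \tfrac12 D_{\KL}(p\|p)+\tfrac12 D_{\KL}(p\|q)=\tfrac12 D_{\KL}(p\|q)$, and symmetrically $D_{\KL}(q\|m)\le \tfrac12 D_{\KL}(q\|p)$. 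Summing the two and halving yields the claimed bound. (The same inequality also follows from a one-line computation: the gap equals $\sum_x p(x)\log\frac{p(x)+q(x)}{2\sqrt{p(x)q(x)}}$, which is nonnegative term by term by the AM--GM inequality.)

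Finally, I would take $\Ebb_{P_T\cdot P_T}[\cdot]$ of this pointwise bound with $p=P_{S|T}$ and $q=P_{S|T'}$. Because $T$ and $T'$ are drawn i.i.d. from $P_T$, exchangeability gives $\Ebb_{P_T\cdot P_T}[D_{\KL}(P_{S|T}\|P_{S|T'})]=\Ebb_{P_T\cdot P_T}[D_{\KL}(P_{S|T'}\|P_{S|T})]$, and each of these equals a quantity at most $\epsilon$ by the definition of an $\epsilon$-KL related environment. Hence $\Ebb_{P_T\cdot P_T}[D_{\JS}(P_{S|T}\|P_{S|T'})]\le \tfrac14(2\epsilon)=\epsilon/2$. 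Combining this with the $\log(2)$ bound proves that the environment is $\min\{\log(2),\epsilon/2\}$-JS related.

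I expect the pointwise inequality $D_{\JS}(p\|q)\le \tfrac14[D_{\KL}(p\|q)+D_{\KL}(q\|p)]$ to be the only real obstacle; the rest is bookkeeping. The subtle point worth emphasizing is that the naive guess $D_{\KL}(q\|m)\le \tfrac12 D_{\KL}(p\|q)$ is false in general, since the second KL term is controlled by the \emph{reverse} divergence $D_{\KL}(q\|p)$. It is therefore essential to exploit the symmetry of the i.i.d. task draw in the last step, which is exactly what lets one replace the reverse-divergence term by $\epsilon$ and recover the clean factor $\tfrac12$.
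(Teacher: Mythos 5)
Your proof is correct and follows essentially the same route as the paper, which simply cites Lin's bound $D_{\JS}(p\|q)\le\tfrac14\bigl[D_{\KL}(p\|q)+D_{\KL}(q\|p)\bigr]$ where you derive it inline from the joint convexity of the KL divergence and then spell out the exchangeability step that lets both directions of the expected KL divergence be bounded by $\epsilon$. One harmless slip in your parenthetical alternative derivation: the gap equals $\sum_x\tfrac12\bigl(p(x)+q(x)\bigr)\log\frac{p(x)+q(x)}{2\sqrt{p(x)q(x)}}$ rather than $\sum_x p(x)\log\frac{p(x)+q(x)}{2\sqrt{p(x)q(x)}}$, though it is still termwise nonnegative by AM--GM, so nothing is affected.
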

\begin{proof}
The proof follows directly from  Lin's upper bound \cite{lin1991divergence} on the JS divergence, \ie, $D_{\JS}(p||q) \leq 0.25( D_{\KL}(p||q)+ D_{\KL}(q||p))$.
% and from the tensorization property of the KL divergence for i.i.d. variables.
\end{proof}
\vspace{-0.6cm}
\begin{examplec}\label{ex:1}
Let the data distribution for task $\tau \in \mathcal{T}$ be normally distributed as $P_{Z|T=\tau}=\Nscr(\tau,\nu^2)$ with mean $\tau$ and variance $\nu^2$. The task distribution $P_T$ defines a distribution over the mean parameter $\tau$. Let $\bar{\mu}$ be the  mean and $\bar{\nu}^2$ be the variance of the task distribution $P_T$. We then have the equality
\begin{align}
    \Ebb_{P_T\cdot P_{T}}\Bigl[D_{\KL}(P_{S|T}||P_{S|T'})\Bigr]
  %  &=m\Ebb_{P_T \cdot P_{T}}\Bigl[\frac{(T-T')^2}{2 \nu^2}\Bigr]\non \\&
    = \frac{m\bar{\nu}^2}{\nu^2},
\end{align}
and hence the task environment is $\epsilon$-KL related if the inequality $m\bar{\nu}^2/\nu^2 \leq \epsilon$ holds. Note that, as the per-task data variance $\nu^2$ decreases for a given task variance $\bar{\nu}^2$, the task dissimilarity parameter $\epsilon$ grows increasingly large. In contrast, by Lemma~\ref{lem:relation}, the task environment is $\epsilon$-JS related with $\epsilon=\min\{\log 2, m \bar{\nu}^2/2\nu^2\}<\infty$.
\end{examplec}
\section{Main Results}\label{sec:mainresults}
In this section, we derive an upper bound on the average absolute value of the meta-generalization gap $|\overline{\Delta \Lscr}|^{\avg}$. The main goal is obtaining information-theoretic insights into the requirements in terms of meta-training data as function of task similarity for an arbitrary meta-learner $P_{U|\mset}$ and base-learner $P_{W|S,U}$.
As is customary in information-theoretic analysis of generalization gaps, we start by making assumptions on the tail probabilities of the loss functions of interest. 
%To this end,
%In this section, we present our main results. To capture the effect of relatedness of tasks in the set $\mathcal{T}$ on the average meta-generalization gap, the idea is to  obtain task specific upper bound of the form
%\begin{align}
%\bigl |\Delta \Lscr^{\avg}(\tau,\tau_{1:N})\bigr | \leq {\rm UB}(\tau, \tau_{1:N}).
%\end{align} which holds for any tasks $\tau, \tau_{1:N}$ in $\mathcal{T}$. It then follows from Jensen's inequality that the average meta-generalization gap in \eqref{eq:avgmetagengap} can be upper bounded as
%\begin{align}
%\bigl |\Delta \Lscr^{\avg} \bigr| &\leq \Ebb_{P_T}\Ebb_{P_{T_{1:N}}}\bigl[\bigl |\Delta \Lscr^{\avg}(T,T_{1:N})\bigr | \bigr] \non \\&\leq \Ebb_{P_T}\Ebb_{P_{T_{1:N}}}[{\rm UB}(T, T_{1:N})]. \label{eq:boundingapproach}
%\end{align}
%Towards bounding $\bigl |\Delta \Lscr^{\avg}(\tau,\tau_{1:N})\bigr |$ for any tasks $\tau, \tau_{1:N} \in \mathcal{T}$, we assume the following. Here,
%we use $P_{U|T_{1:N}}$ to denote the marginal of the joint distribution of the meta-training data and hyperparameter $P_{\mset,U|T_{1:N}}=P_{\mset|T_{1:N}}P_{U|\mset}$, and, in a similar way, $P_{W|T,T_{1:N}}$ denote the marginal of the joint distribution $P_{W,\Zm,U|T,T_{1:N}}=P_{U|T_{1:N}} \PzT P_{W|\Zm,U}$.
%The main assumption is as follows.
\begin{assumption}\label{assum:1}
 Fix an arbitrary distribution $R_{\Zm_i|\tau,\tau_{1:N}} \in \Pscr(\Zscr^m)$ for $i=1,\hdots,N$ defined on the space of training data sets $\Zscr^m$, which can depend on meta-test task $\tau$ and meta-training tasks $\tau_{1:N}$. For every choice of meta-test task $\tau$ and meta-training tasks $\tau_{1:N}$ in $\mathcal{T}$, the following two conditions hold:
\begin{itemize}
\item[$(a)$] The loss function $l(w,Z)$ is $\delta_{\tau}^2$-sub-Gaussian when $Z \sim P_{Z|T=\tau}$ for all $w \in \Wscr$;
\item [$(b)$] The average per-task training loss $L_t(u|\Zm_i)$ in \eqref{eq:per-tasktraining} is $\sigma^2$-sub-Gaussian when $\Zm_i \sim R_{\Zm_i|\tau,\tau_{1:N}}$ for all $u \in \Uscr$, and for all $i \in \{1,\hdots,N\}$. 
%Here, $R_{\Zm_i|\tau,\tau_{1:N}} \in \Pscr(\Zscr^m)$ is an  auxiliary distribution defined on the space of data sets $\Zscr^m$, which is generally dependent on test task $\tau$ and meta-training tasks $\tau_{1:N}$.
%\item [$(b)$] The loss function $l(W,Z)$ is $\delta_{\tau}^2$-sub-Gaussian under $(W,Z) \sim P_{W|\tau,\tau_{1:N}}P_{Z|T=\tau}$, and
%$L_t(U|\Zm)$ is $\sigma^2$-sub-Gaussian under $(U,\Zm)\sim P_{U|\tau_{1:N}} R_{\Zm|\tau,\tau_{1:N}}$. 
%Here, , and $P_{U|\tau_{1:N}}$ is the marginal of the joint distribution $P_{U|\mset}P_{\mset|\tau_{1:N}}$ for any $\tau_{1:N} \in \mathcal{T}$.}
\end{itemize}
\end{assumption}

We note that Assumption~\ref{assum:1}$(a)$ does not in general imply  Assumption~\ref{assum:1}$(b)$.
%except for the extreme case when the base-learner ignores training data to output the hyperparameter as $P_{W|S_i,U}=\delta(W-U)$ \footnote{$\delta(x-y)$ denotes a Kronecker delta function which takes value 1 when $x-y=0$ and is zero otherwise. }. In this case, together with the choice of $R_{S_i|\tau,\tau_{1:N}}=P_{S_i|\tau_i}$, the data distribution of task $\tau_i$, it can be verified that $\sigma^2= \max\{\delta_{\tau}^2/m: \tau \in \mathcal{T}\}$.
 Furthermore, if the loss function is bounded, \ie $ a\leq l(\cdot,\cdot)\leq b$ for some scalars $a \geq 0 $ and $b<\infty$, Assumption~\ref{assum:1} holds with $\sigma^2=\delta_{\tau}^2=(b-a)^2/4$ for all $\tau \in \mathcal{T}$ and $i \in \{1,\hdots,N\}$.
%We note that Assumption~\ref{assum:1}$(b)$ is defined with respect to the meta-test task $\tau$. This is essential to capture the KL divergence between the training task data distribution $P_{Z|\tau_i}$ and the meta-test task data distribution $P_{Z|\tau}$.
\subsection{Bounds on Average Meta-Generalization Gap $\Delta| \Lscr^{\avg}|$}
 In this section, we present our main results, namely an information-theoretic upper bound on the average absolute meta-generalization gap $|\overline{\Delta \Lscr}|^{\avg}$ in \eqref{eq:avgmetagengap}, that depends explicitly on the relatedness of tasks as per Definition~\ref{def:KL}.
 \begin{theorem}\label{thm:generalbound}
 Under Assumption~\ref{assum:1}, the following upper bound on the average meta-generalization gap holds
 \begin{align}
    &|\overline{\Delta \Lscr}|^{\avg} \hspace{-0.1cm}\leq  \frac{1}{N}\hspace{-0.1cm} \sum_{i=1}^N \biggl(\hspace{-0.1cm}\sqrt{2 \sigma^2 \Ebb_{P_T,P_{T_{1:N}}}[D_{\KL}(\PZMTi||R_{\Zm_i|T,T_{1:N}})]}+\non \\& \sqrt{2 \sigma^2 \bigl(I(U;\Zm_i|T_{1:N})\hspace{-0.1cm}+\hspace{-0.1cm}\Ebb_{P_T,P_{T_{1:N}}}[D_{\KL}(\PZMti||R_{\Zm_i|T,T_{1:N}})] \bigr)} \biggr)\non \\& +B, \label{eq:generalbound}
 \end{align}
 where $P_{S_i|T}$ and $P_{S_i|T_i}$ denote the distributions of the data random variable $S_i$ when generated from task $T$ and $T_i$, respectively and \begin{align}
B=\biggl[\frac{1}{m} \sum_{j=1}^m \Ebb_{T'\sim P_T} \sqrt{2 \delta_{T'}^2 I(W;Z_j|T=T',T_{1:N})}\biggr].\label{eq:B}
\end{align}
 \end{theorem}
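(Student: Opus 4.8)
The plan is to split the meta-generalization gap into a \emph{within-task} part evaluated on the meta-test task and an \emph{environment-level} part comparing the meta-test and meta-training tasks, and then to bound each by a sub-Gaussian change-of-measure argument. I introduce the intermediate quantity $\Lscr_t(U|T)=\Ebb_{\PzT}\Ebb_{P_{W|\Zm,U}}[L_t(W|\Zm)]$, the expected per-task training loss obtained when the base-learner is fed a fresh data set drawn from the meta-test task $T$. Then $\overline{\Delta\Lscr}(T,T_{1:N})=\Ebb_{P_{\mset|T_{1:N}}P_{U|\mset}}\bigl[\Lscr_g(U|T)-\Lscr_t(U|T)\bigr]+\Ebb_{P_{\mset|T_{1:N}}P_{U|\mset}}\bigl[\Lscr_t(U|T)-\Lscr_t(U|\mset)\bigr]$, and applying the triangle inequality to this split, and again to the outer expectation $\Ebb_{P_T,P_{T_{1:N}}}[|\cdot|]$, reduces the claim to bounding the two expected absolute values separately.

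For the within-task part I expand $\Lscr_g(U|T)-\Lscr_t(U|T)=\frac1m\sum_{j=1}^m\Ebb\bigl[\Ebb_{Z_j'\sim P_{Z|T}}[l(W,Z_j')]-l(W,Z_j)\bigr]$. For a fixed realization $T=T'$, $T_{1:N}$, the $j$-th summand equals the gap between the expectation of $l(W,Z_j)$ under the product $P_{W|T',T_{1:N}}\times P_{Z_j|T'}$ and under the joint $P_{W,Z_j|T',T_{1:N}}$. By Assumption~\ref{assum:1}$(a)$ the map $z\mapsto l(W,z)$ is $\delta_{T'}^2$-sub-Gaussian under $P_{Z|T'}$, so conditioning on $W$ and applying the Donsker--Varadhan change-of-measure inequality bounds this summand by $\sqrt{2\delta_{T'}^2\,I(W;Z_j|T=T',T_{1:N})}$; averaging over $j$ and $T'\sim P_T$ and using Jensen's inequality to carry the expectation over $T_{1:N}$ inside the square root recovers exactly $B$ in \eqref{eq:B}.

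For the environment-level part I write it as $\frac1N\sum_{i=1}^N\bigl[\Lscr_t(U|T)-L_t(U|\Zm_i)\bigr]$, view $\Lscr_t(U|T)=\Ebb_{\Zm_i\sim\PZMTi}[L_t(U|\Zm_i)]$, and insert the reference $R_{\Zm_i|T,T_{1:N}}$ to split each summand as $\bigl[\Ebb_{\PZMTi}[L_t(U|\Zm_i)]-\Ebb_{R}[L_t(U|\Zm_i)]\bigr]+\bigl[\Ebb_{R}[L_t(U|\Zm_i)]-L_t(U|\Zm_i)\bigr]$. With $U$ held fixed, Assumption~\ref{assum:1}$(b)$ makes $\Zm_i\mapsto L_t(U|\Zm_i)$ be $\sigma^2$-sub-Gaussian under $R$, so change of measure bounds the first bracket by $\sqrt{2\sigma^2\,D_{\KL}(\PZMTi||R)}$ and, after Jensen over $(T,T_{1:N})$, gives the first square-root term. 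The second bracket is a change of measure between the joint law $P_{U,\Zm_i|T_{1:N}}$ and the product $P_{U|T_{1:N}}\times R$, handled by conditioning on $U$, invoking the same sub-Gaussianity, and applying Jensen over $U$; the key identity is the KL chain rule $D_{\KL}\bigl(P_{U,\Zm_i|T_{1:N}}\,\|\,P_{U|T_{1:N}}\times R\bigr)=I(U;\Zm_i|T_{1:N})+D_{\KL}(\PZMti||R)$, after which a final Jensen over $(T,T_{1:N})$ delivers the second square-root term.

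I expect the second bracket of the environment-level part to be the main obstacle, since this is where the statistical coupling between the hyperparameter $U$ and its own training subset $\Zm_i$ must be resolved; making the KL chain rule split cleanly into the mutual information $I(U;\Zm_i|T_{1:N})$ and the mismatch $D_{\KL}(\PZMti||R)$ requires carefully tracking that $U$ and $\Zm_i$ are conditionally independent of the meta-test task $T$ given $T_{1:N}$, so that the reference $R_{\Zm_i|T,T_{1:N}}$ may depend on $T$ while the coupled pair does not. The surrounding steps---the triangle inequalities, the two-sided Donsker--Varadhan bound, and Jensen's inequality for the concave square root---are routine.
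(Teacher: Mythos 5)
Your proposal is correct and follows essentially the same route as the paper: the same decomposition through the intermediate quantity $\Lscr_{g,t}(u|\tau)=\Ebb_{\PzTt}[L_t(u|\Zm)]$, the same conditioning on realizations of $(T,T_{1:N})$ followed by Jensen's inequality, and the same insertion of the auxiliary measure $R_{\Zm_i|T,T_{1:N}}$ with two-sided sub-Gaussian change-of-measure bounds. The only cosmetic difference is that the paper packages the Donsker--Varadhan steps as exponential-moment inequalities optimized over $\lambda$ (its Lemma on exponential inequalities), whereas you invoke the change-of-measure bound and the KL chain rule $D_{\KL}(P_{U,\Zm_i|T_{1:N}}\|P_{U|T_{1:N}}\times R)=I(U;\Zm_i|T_{1:N})+D_{\KL}(\PZMti\|R)$ directly; these are the same computation.
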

\begin{proof}
See Appendix~\ref{app:proof}.
\end{proof} 

 To interpret the upper bound \eqref{eq:generalbound}--\eqref{eq:B}, we start by observing that, in a manner similar to \cite{jose2020information}, the theorem is proved by leveraging the following decomposition of the meta-generalization gap $\Delta \Lscr(u|\mset,\tau)$ for any test task $\tau \in \mathcal{T}:$
\begin{align}
\Delta \Lscr(u|\mset,\tau)&=\Lscr_g(u|\tau)-\Lscr_{g,t}(u|\tau)\non \\&+\Lscr_{g,t}(u|\tau)-\Lscr_t(u|\mset). \label{eq:decomposition}
\end{align}
In \eqref{eq:decomposition}, the term $\Lscr_{g,t}(u|\tau)=\Ebb_{\PzTt}[L_t(u|\Zm)]$, with $L_t(u|\Zm)$ as in \eqref{eq:per-tasktraining}, represents the average per-task training loss of the \textit{meta-test} task $\tau$ as a function of the hyperparameter $u$. Therefore, while the first difference in \eqref{eq:decomposition} captures the \textit{within-task generalization gap} for the meta-test task, which results from observing a finite number $m$ of data samples per task; the second difference accounts for the \textit{environment-level generalization gap} from meta-training to meta-test tasks, resulting from the observation of a finite number of tasks $N$.  The upper bound in Theorem~\ref{thm:generalbound} is obtained by separately bounding the two differences in \eqref{eq:decomposition}.

With the decomposition \eqref{eq:decomposition} in mind, the term $B$ in \eqref{eq:generalbound} captures the within-task generalization gap via the conditional mutual information $I(W;Z_j|T=\tau,T_{1:N})$ between the model parameter $W$ and the $j$th sample $Z_j$ of the training set $\Zm$ corresponding to the meta-test task $\tau \in \mathcal{T}$, when the hyperparameter $U\sim P_{U|\mset}$ is randomly selected by the meta-learner trained on tasks $T_{1:N}$. This term is consistent with the standard information-theoretic analyses of conventional learning in \cite{xu2017information}, \cite{bu2019tightening}, and can be interpreted as a measure of the sensitivity of the base-learner's output $W$ to the individual data samples $Z_j$.

In contrast, the environment-level generalization gap is captured by two terms. The first is the conditional mutual information $I(U;\Zm_i|T_{1:N})$ between the hyperparameter $U$ and the $i$th subset $\Zm_i$ of the meta-training set $\mset$ corresponding to tasks $T_{1:N}$. This term can be analogously interpreted as the sensitivity of the meta-learner's output $U$ to the individual meta-training dataset $\Zm_i$. The other two terms include the KL divergence $D_{\KL}(P_{S_i|T}||R_{\Zm_i|T,T_{1:N}})$ between the data distribution $P_{S_i|T}$ of the meta-test task $T$ and the auxiliary distribution $R_{\Zm_i|T,T_{1:N}}$, as well as the divergence $D_{\KL}(P_{S_i|T_i}||R_{\Zm_i|T,T_{1:N}})$ between the data distribution of the $i$th training task $T_i$ and the auxiliary distribution. As we will see next, these two terms allow us to bound the meta-generalization gap as a function of the similarity among tasks as per Definition~\ref{def:KL}.
\subsection{Bounds on $|\overline{\Delta \Lscr}|^{\avg}$ for $\epsilon$-KL and $\epsilon$-JS Related Task Environments}
The following bound holds for $\epsilon$-KL related tasks.
 %by combining the upper bounds obtained in \eqref{eq:epsilon-KLrelatedness} and \eqref{eq:delta-JSrelatedness} in \eqref{eq:boundingapproach}, and by applying Jensen's inequality. We thus have the following bounds on the average meta-generalization gap.
 \begin{corollary}\label{thm:KL_avggap}
If the task environment $(P_T, \{P_{Z|T=\tau}\}_{\tau \in \mathcal{T}})$  is $\epsilon$-KL related, the following bound on the average meta-generalization gap holds under Assumption~\ref{assum:1}
\begin{align}
& |\overline{\Delta \Lscr}|^{\avg} \leq  \frac{1}{N} \sum_{i=1}^N \sqrt{2 \sigma^2 \bigl(I(U;\Zm_i|T_{1:N})+\epsilon \bigr)} +B. \label{eq:epsilon-KLrelatedness_avggap}
\end{align}
%where \begin{align}
%B=\biggl[\frac{1}{m} \sum_{i=1}^m \Ebb_{P_T} \sqrt{2 \delta_{T}^2 I(W;Z_j|T=\tau,T_{1:N})}\biggr].\label{eq:B}
%\end{align}
 \end{corollary}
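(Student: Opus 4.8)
The plan is to derive Corollary~\ref{thm:KL_avggap} directly from Theorem~\ref{thm:generalbound} by exploiting the freedom in the choice of the auxiliary distribution $R_{\Zm_i|T,T_{1:N}}$. Recall that in the master bound \eqref{eq:generalbound} this distribution is arbitrary and, per Assumption~\ref{assum:1}, is allowed to depend on both the meta-test task $T$ and the meta-training tasks $T_{1:N}$. The key step is to instantiate it as the data distribution of the meta-test task itself, i.e.\ to set $R_{\Zm_i|T,T_{1:N}}=\PZMTi$ for each $i$; this is a legitimate choice since $\PZMTi$ depends only on $T$, which is among the quantities $R$ is permitted to depend on.

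With this choice the first KL term in \eqref{eq:generalbound} vanishes, since $D_{\KL}(\PZMTi||R_{\Zm_i|T,T_{1:N}})=D_{\KL}(\PZMTi||\PZMTi)=0$, so the first square-root summand drops out entirely. The second KL term reduces to $D_{\KL}(\PZMti||\PZMTi)$, the divergence between the data distribution of the $i$th meta-training task $T_i$ and that of the meta-test task $T$. Next I would evaluate the expectation $\Ebb_{P_T,P_{T_{1:N}}}[D_{\KL}(\PZMti||\PZMTi)]$: because the integrand involves only $T$ and $T_i$, the remaining tasks marginalize out, and since $T$ and $T_i$ are drawn independently from $P_T$, the expectation equals $\Ebb_{P_T\cdot P_T}[D_{\KL}(P_{S|T}||P_{S|T'})]$, which is at most $\epsilon$ by Definition~\ref{def:KL} for an $\epsilon$-KL related environment. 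Substituting into \eqref{eq:generalbound} and using monotonicity of the square root to replace this expectation by $\epsilon$ then yields \eqref{eq:epsilon-KLrelatedness_avggap}, with the term $B$ carried over unchanged.

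The argument is essentially bookkeeping, so I do not expect a genuine obstacle. The only points requiring care are matching the expectation to Definition~\ref{def:KL} — namely verifying the exchangeability and independence of $T$ and $T_i$ so that the order of the arguments in the KL divergence aligns with the definition, and confirming that the nuisance tasks $T_j$, $j\neq i$, integrate away — and ensuring that Assumption~\ref{assum:1}$(b)$ is invoked precisely for the chosen $R_{\Zm_i|T,T_{1:N}}=\PZMTi$, which is exactly the sub-Gaussianity hypothesis that legitimizes this instantiation. Since the corollary is a clean specialization of the master bound through a single deterministic choice of auxiliary distribution, no further estimates are needed.
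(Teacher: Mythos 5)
Your proposal is correct and matches the paper's own proof: the authors likewise obtain the corollary from Theorem~\ref{thm:generalbound} by setting $R_{\Zm_i|T,T_{1:N}}=\PZMTi$, which kills the first KL term and reduces the second to $\Ebb_{P_T,P_{T_{1:N}}}[D_{\KL}(\PZMti||\PZMTi)]\leq\epsilon$ via Definition~\ref{def:KL}. The bookkeeping points you flag (independence of $T$ and $T_i$, marginalization of the other tasks, and that Assumption~\ref{assum:1}$(b)$ applies to this choice of $R$) are exactly the right ones and all go through.
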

 \begin{proof}
 The bound in \eqref{eq:epsilon-KLrelatedness_avggap} follows from \eqref{eq:generalbound} by choosing $R_{\Zm_i|T,T_{1:N}}=P_{S_i|T}$ and by using the definition of $\epsilon$-KL related task environment in \eqref{eq:KL}.
 \end{proof}
 %\begin{corollary}
% If the set of tasks $\mathcal{T}$ is $\epsilon$-KL related, we then have that 
 %\begin{align}
 %  & |\Delta \Lscr^{\avg}\bigr | \leq B+ \non \\& \frac{1}{N} \sum_{i=1}^N \sqrt{2 \sigma^2 \bigl(I(U;\Zm_i|T_{1:N})+m  \epsilon \bigr)}, \label{eq:epsilon-KLrelatedness_avggap}
 %\end{align}
% \end{corollary}
 Similarly, we also have the following bound for $\epsilon$-JS related task environment.
 \begin{corollary}\label{thm:JS_avggap}
% Let Assumption~\ref{assum:1} hold with $R_{\Zm|\tau,\tau_{1:N}}=0.5(P^m_{Z|T=\tau}+P^m_{Z|T=\tau_i})$ for all tasks $\tau, \tau_{1:N}=(\tau_1,\hdots,\tau_N) \in \mathcal{T}$, $\tau \not \in \tau_{1:N}$.
If the task environment $(P_T, \{P_{Z|T=\tau}\}_{\tau \in \mathcal{T}})$  is $\epsilon$-JS related for $\epsilon \in (0,\log (2)]$, we have the following bound
\begin{align}
&|\overline{\Delta \Lscr}|^{\avg} \leq \frac{2}{N} \sum_{i=1}^N \sqrt{ \sigma^2 \bigl(I(U;\Zm_i|T_{1:N})+2 \epsilon \bigr)} + B. \label{eq:delta-JSrelatedness_avggap}
\end{align}
 \end{corollary}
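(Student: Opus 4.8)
The plan is to specialize the general bound of Theorem~\ref{thm:generalbound} by making a judicious choice of the free auxiliary distribution $R_{\Zm_i|T,T_{1:N}}$. Whereas Corollary~\ref{thm:KL_avggap} picks $R_{\Zm_i|T,T_{1:N}}=P_{S_i|T}$ so that the first KL term vanishes, here the choice dictated by the structure of the JS divergence is the \emph{midpoint} mixture $M_i=\tfrac12\bigl(P_{S_i|T}+P_{S_i|T_i}\bigr)$. This is admissible, since $M_i$ depends only on the meta-test task $T$ and on the meta-training task $T_i$ contained in $T_{1:N}$; as in Corollary~\ref{thm:KL_avggap}, I take Assumption~\ref{assum:1}$(b)$ to hold for it.

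With $R_{\Zm_i|T,T_{1:N}}=M_i$, the two KL terms attached to index $i$ in \eqref{eq:generalbound} become $D_{\KL}(P_{S_i|T}\|M_i)$ and $D_{\KL}(P_{S_i|T_i}\|M_i)$, whose sum is, by the very definition of the JS divergence, exactly $2D_{\JS}(P_{S_i|T}\|P_{S_i|T_i})$. Writing $A_i=\Ebb_{P_T,P_{T_{1:N}}}[D_{\KL}(P_{S_i|T}\|M_i)]$, $C_i=\Ebb_{P_T,P_{T_{1:N}}}[D_{\KL}(P_{S_i|T_i}\|M_i)]$, and $I_i=I(U;\Zm_i|T_{1:N})$, the $i$th summand of \eqref{eq:generalbound} reads $\sqrt{2\sigma^2 A_i}+\sqrt{2\sigma^2(I_i+C_i)}$. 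The crucial step is to merge these two square roots \emph{before} invoking task relatedness, using the elementary inequality $\sqrt{x}+\sqrt{y}\le\sqrt{2(x+y)}$ (immediate from Cauchy--Schwarz). This gives $\sqrt{2\sigma^2 A_i}+\sqrt{2\sigma^2(I_i+C_i)}\le 2\sigma\sqrt{A_i+C_i+I_i}$. Finally, since $A_i+C_i=2\Ebb_{P_T,P_{T_{1:N}}}[D_{\JS}(P_{S_i|T}\|P_{S_i|T_i})]$ and $T,T_i$ are i.i.d.\ draws from $P_T$, the $\epsilon$-JS relatedness of Definition~\ref{def:KL} yields $A_i+C_i\le 2\epsilon$, whence $2\sigma\sqrt{A_i+C_i+I_i}\le 2\sqrt{\sigma^2(I_i+2\epsilon)}$. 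Averaging over $i$ and adding $B$ reproduces \eqref{eq:delta-JSrelatedness_avggap}.

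I expect the only genuinely delicate point to be the ordering of these last manipulations: bounding each KL term separately by $2D_{\JS}$ while keeping the two square roots apart would leave an extra additive $2\sigma\sqrt{2\epsilon}$ term and fail to collapse into the clean factor $2\sqrt{\sigma^2(I_i+2\epsilon)}$. It is precisely the combination of the two square roots via $\sqrt{x}+\sqrt{y}\le\sqrt{2(x+y)}$, applied \emph{before} the relatedness bound, that exploits the identity $A_i+C_i=2D_{\JS}(P_{S_i|T}\|P_{S_i|T_i})$ with no slack. The remaining ingredients---the admissibility of the mixture $M_i$ and the fact that $\Ebb_{P_T,P_{T_{1:N}}}[D_{\JS}(P_{S_i|T}\|P_{S_i|T_i})]$ is exactly the quantity controlled by the $\epsilon$-JS definition---are routine.
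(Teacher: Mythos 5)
Your proof is correct and follows essentially the same route as the paper: the same choice of the midpoint mixture $R_{\Zm_i|T,T_{1:N}}=\tfrac12(\PZMTi+\PZMti)$, the same merging of the two square roots via $\sqrt{x}+\sqrt{y}\le\sqrt{2(x+y)}$ (which the paper phrases as concavity of $\sqrt{2\sigma^2 y}$), and the same identification of the summed KL terms with $2D_{\JS}$ before invoking the $\epsilon$-JS relatedness. Your observation that the square roots must be combined before applying the relatedness bound is exactly the point of the paper's argument.
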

 \begin{proof}
 To obtain \eqref{eq:delta-JSrelatedness_avggap} from \eqref{eq:generalbound}, inspired by \cite{aminian2020jensen}, we choose the auxiliary distribution $R_{\Zm_i|T,T_{1:N}}=0.5(\PZMTi+\PZMti)$ for $i \in \{1,\hdots,N\}$. Denote $C=\Ebb_{P_T,P_{T_{1:N}}}[D_{\KL}(\PZMTi||R_{\Zm_i|T,T_{1:N}})]$ and $D=I(U;\Zm_i|T_{1:N})+\Ebb_{P_T,P_{T_{1:N}}}[D_{\KL}(\PZMti||R_{\Zm_i|T,T_{1:N}})]$. From the concavity of $\sqrt{2 \sigma^2y}$, we have 
$
 \sqrt{2 \sigma^2 C}+ \sqrt{2 \sigma^2 D} \leq 2 \sqrt{\sigma^2 (C+D)},
$
% \begin{align*}
%     &\sqrt{2 \sigma^2 \Ebb_{P_T,P_{T_{1:N}}}[D_{\KL}(\PZMTi||R_{\Zm_i|T,T_{1:N}})]}+\non \\& \sqrt{2 \sigma^2 \bigl(I(U;\Zm_i|T_{1:N})+\Ebb_{P_T,P_{T_{1:N}}}[D_{\KL}(\PZMti||R_{\Zm_i|T,T_{1:N}})] \bigr)} \non 
%     \\& \leq 2 \sqrt{\sigma^2 \biggl(\Ebb_{P_T,P_{T_{1:N}}}[2D_{\JS}(P_{S_i|T_i}||P_{S_i|T})]+I(U;\Zm_i|T_{1:N}) \biggr)},
% \end{align*}
which, 
 together with the definition of $\epsilon$-JS related task environment, concludes the proof.
 \end{proof}
\vspace{-0.4cm}

%The bounds \eqref{eq:epsilon-KLrelatedness_avggap} and \eqref{eq:delta-JSrelatedness_avggap} depend on the number of tasks $N$ and on the number of samples per task $m$ through separate terms of the order of $1/N$ and $1/m$. Similar additive dependencies are known for bounds on the optimality gap for meta-learning (cf. \cite{maurer2016benefit}). This dependence implies that ensuring a small average meta-generalization gap requires both the number of tasks and per-task samples to be sufficiently large \cite{jose2020information}, \cite{amit2018meta}.
% Importantly, as revealed for the first time by the analysis in this paper, the required number of tasks increases with the task dissimilarity parameter $\epsilon$.
 As revealed for the first time by the analysis in this paper, the bounds \eqref{eq:epsilon-KLrelatedness_avggap} and \eqref{eq:delta-JSrelatedness_avggap} demonstrate that the required number of tasks increases with the task dissimilarity parameter $\epsilon$.  We also note that,
%With the increase in the number of samples $m$ per task, the conditional mutual information $I(W;Z_j|T=\tau,T_{1:N})$ between the model parameter $W$ and each individual sample $Z_j$ is expected to decrease.  Similarly, the sensitivity of hyperparameter $U$ to individual task data sets $\Zm_i$ decreases as $N$ is large. However, the task dissimilarity parameter $\epsilon$ yields a non-vanishing bound even in the limit as $m,N \rightarrow \infty$. 
 in the asymptotic regime as $m,N \rightarrow \infty$, the bounds in \eqref{eq:epsilon-KLrelatedness_avggap} and \eqref{eq:delta-JSrelatedness_avggap} are non-vanishing, in compliance with the discussion in Section~\ref{sec:pblmdefinition} on the asymptotic behaviour of the metric $|\overline{\Delta \Lscr}|^{\avg}$ as opposed to $|\overline{\Delta \Lscr}^{\avg}|$ considered in \cite{jose2020information}. 
\section{Examples}\label{sec:example}
This section provides further insights by considering two simple examples. To the best of our knowledge, no prior work has studied the average of the absolute value of the generalization gap $|\overline{\Delta \Lscr}|^{\avg}$. Therefore, no bounds exist that can be directly compared. That said, we will provide comparisons with bounds on $|\overline{\Delta \Lscr}^{\avg}|$ derived in \cite{jose2020information}.
\vspace{-0.1cm}
\subsection{Mean Estimation with Meta-Learned Bias}
 We first consider the example of mean estimation of a Gaussian random variable, for which the information-theoretic bounds in \eqref{eq:epsilon-KLrelatedness_avggap} and \eqref{eq:delta-JSrelatedness_avggap} can be computed in closed form. As in Example~\ref{ex:1}, each task $\tau \in \mathcal{T}$ is identified by a data distribution $P_{Z|T=\tau}=\Nscr(\tau,\nu^2)$ and the task distribution is given as $P_T = \Nscr(\bar{\mu},\bar{\nu}^2)$.
 %or each task $\tau \in \mathcal{T}$, let the data is distributed according toF  $P_{Z|\tau}=\mathcal{N}(z;\mu_{\tau},\nu^2)$. The task distribution $P_T={\rm Beta}(a,b)$ with parameters $(a,b)$ defines a distribution over the mean parameters $\mu_{\tau}$.
%For meta-training, the meta-learner observes tasks $\tau_{1:N}=(\tau_1,\hdots,\tau_N)$ randomly sampled from the task environment according to $P_T$. Corresponding to each task $\tau_i$, 
Based on the training data set $\Zm_i=(Z_{i,1},\hdots, Z_{i,m})$ of task $\tau_i$, the base-learner outputs the estimate
 $
W_i=\alpha  \tilde{\Zm_i}+(1-\alpha)u, $ which is a convex combination of sample average $\tilde{\Zm_i}=\sum_{j=1}^m Z_{i,j}/m$ and a bias hyperparameter vector $u$, with $0\leq \alpha \leq 1$.  The performance of a model parameter $w$ is measured on a test data point $Z$ according to the loss function
 $l(w,z)=\min\{(w-z)^2,c^2\}$, for some scalar constant $c>0$. 
%\begin{align}
%    l(w,z)= \begin{cases} (w-z)^2 &\mbox{if} |w-z| \leq c\\
%    c^2 & \mbox{else}.
%    \end{cases}
%\end{align}
The meta-learner chooses the bias vector as
$
U= \frac{1}{N} \sum_{i=1}^N \tilde{\Zm_i},
$ which is the empirical average over the data sets of $N$ meta-training tasks. 

Due to the form of the considered loss function, the true average meta-generalization gap cannot be computed in closed form. In contrast, the bounds in \eqref{eq:epsilon-KLrelatedness_avggap} and \eqref{eq:delta-JSrelatedness_avggap} can be computed as follows. Conditioned on a meta-test task $\tau$ and meta-training tasks $\tau_{1:N}$, we have that  $
    \tilde{\Zm_i} \sim \Nscr \bigl(\tau_i, \nu^2/m\bigr)$  and
    $U \sim \Nscr\bigl(N^{-1} \sum_{i=1}^N \tau_i,\nu^2/mN\bigr)  
$, whereby we have
 $I(U;\Zm_i|T_{1:N})=0.5\log \bigl(N/(N-1)\bigr)$. Similarly, it can be seen that 
%  for bounds \eqref{eq:epsilon-KLrelatedness_avggap} and \eqref{eq:delta-JSrelatedness_avggap}. Further, to evaluate term $B$ in \eqref{eq:B}, we note that the estimate \eqref{eq:baselearner_example} corresponding to a meta-test task $\tau \in \mathcal{T}$ is distributed as
%\begin{align}
% W &\sim \Nscr\biggl( \alpha \tau +\frac{1-\alpha}{N} \sum_{i=1}^N \tau_i, \alpha^2 \frac{\nu^2}{m} + \frac{(1-\alpha)^2\nu^2}{mN}\biggr).
%\end{align} Consequently, 
the mutual information $I(W;Z_j|T=\tau,T_{1:N})$ equals
$
I(W;Z_j|T=\tau,T_{1:N})=0.5 \log (\frac{\alpha^2+(1-\alpha)^2/N}{\alpha^2(m-1)/m+(1-\alpha)^2/N}).
$ Together with $\delta_{\tau}^2=\sigma^2=c^4/4$, the bound in \eqref{eq:epsilon-KLrelatedness_avggap} evaluates as
\begin{align}
\hspace{-0.2cm}\frac{c^2}{\sqrt{2}}\sqrt{\frac{1}{2}\log \frac{N}{N-1}\hspace{-0.05cm}+\epsilon}\hspace{-0.05cm}+\hspace{-0.1cm} \sqrt{\frac{c^4}{4}\log \biggl(\frac{\alpha^2+\frac{(1-\alpha)^2}{N}}{\frac{\alpha^2(m-1)}{m}+\frac{(1-\alpha)^2}{N}} \biggr)} \label{eq:MIbound_KL}
\end{align}
with $\epsilon=m\bar{\nu}^2/\nu^2$. Using Lemma~\ref{lem:relation}, the bound in \eqref{eq:delta-JSrelatedness_avggap} can be similarly evaluated by taking $\epsilon=\min\{ \log(2),m\bar{\nu}^2/2\nu^2\}$. From these experiments, it can be seen that, as $N,m \rightarrow \infty$, the bound \eqref{eq:MIbound_KL} tends to $c^2 \sqrt{\epsilon} /\sqrt{2}$, which is non-vanishing unless the tasks in the environment are identical. This is unlike the bounds on $|\overline{\Delta \Lscr}^{\avg}|$ obtained in \cite{jose2020information}.
\subsection{ Ridge Regression with Meta-Learned Bias}
\begin{figure}[h!]
 \centering 
   \includegraphics[scale=0.4,trim=2.2in 1.2in 1.55in 1in,clip=true]{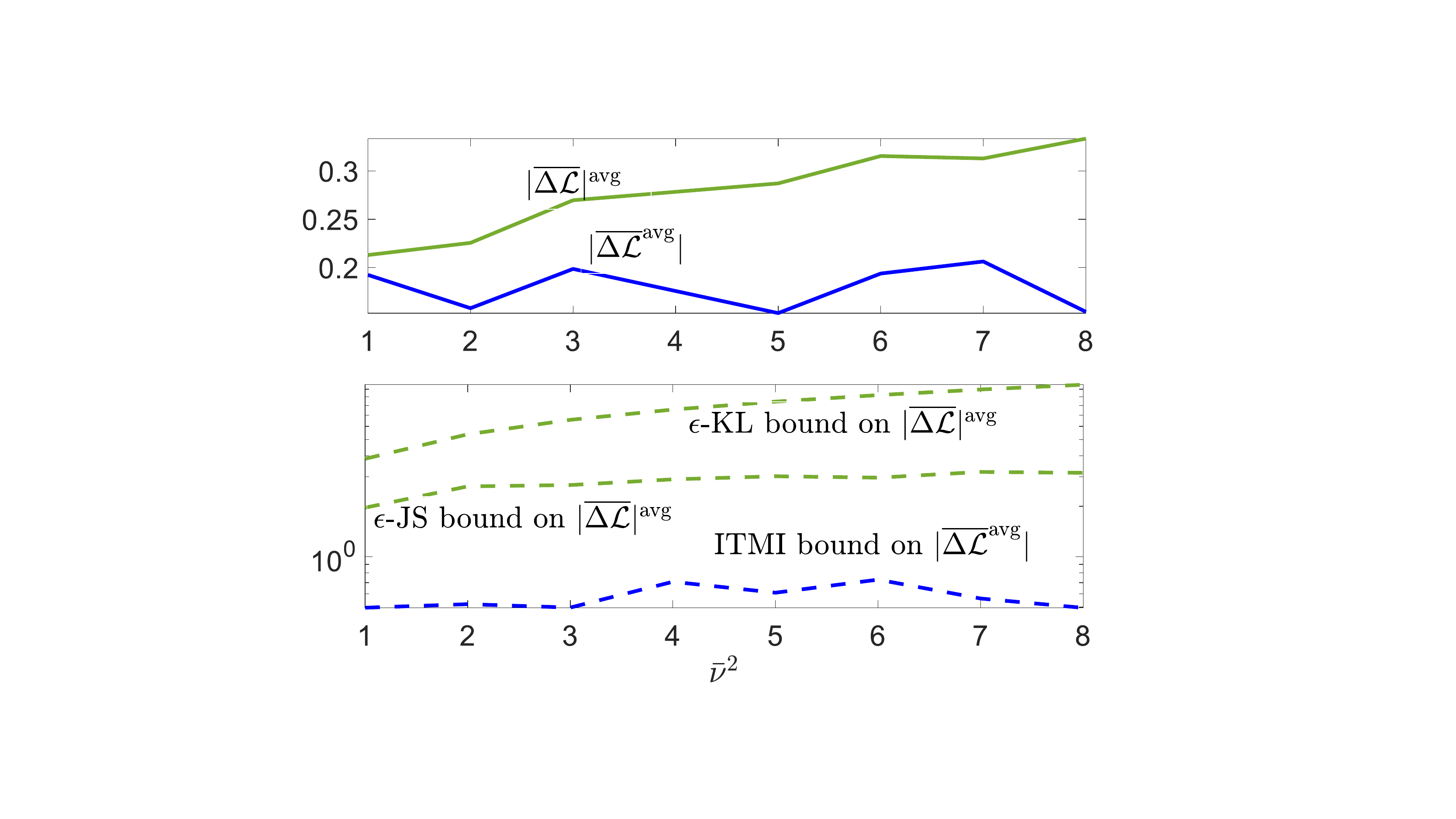} 
   \caption{Comparison of  $|\overline{\Delta \Lscr}|^{\avg}$ and $|\overline{\Delta \Lscr}^{\avg}|$ (top panel) with their corresponding upper bounds (bottom panel) as a function of $\bar{\nu}^2$ ($c=1.5, \lambda=2,\nu^2=1.1$, $N=4,m=6$, $\mu_w=[2 \hspace{0.1cm} 3]$). }
   % bounds \eqref{eq:epsilon-KLrelatedness_avggap} and \eqref{eq:delta-JSrelatedness_avggap} as a function of task dissimilarity parameter $\epsilon/m=\bar{\nu}^2/\nu^2$ with fixed $\bar{\nu}^2=0.8$ and $\delta=\min\{ \log(2),  \epsilon/2\}$ ($m=5,N=10,\alpha=0.2$, $c=1$ and $\bar{\mu}=1$). } 
   \label{fig:exp5}
   \vspace{-0.3cm}
  \end{figure} 
We now consider the example of linear regression with the base-learner performing biased regularization \cite{denevi2018incremental,denevi2019learning,denevi2020advantage}. Let each data point $Z=(X,Y)$ be denoted as a tuple of feature vector $X \in \Real^2$ and output label $Y \in \Real$. The base-learner assumes a linear model $f(X)=W'X$, with $W \in \Real^{ 2}$ being the model parameter vector, and $A'$ denoting the transpose of matrix $A$. The model is well-specified, in the sense that, for each task $\tau \in \mathcal{T}$, there exists a true vector $\overline{W} \in \Real^{2}$ such that $X$ is uniformly distributed within the unit circle and $Y|X \sim \Nscr(\overline{W}'X,\nu^2)$.
% For each task $\tau \in \mathcal{T}$, the learning goal is to infer the model parameter vector $W_{\tau} \in \Real^2 $ of the regression function $f_{\tau}(X)=W_{\tau}' X$, where $A'$ denote the transpose of matrix $A$. Towards this, the base-learner observes a data set of $m$ tuples generated i.i.d as follows: $X$ is uniformly distributed within a unit circle and $Y|X \sim \Nscr(W_{\tau}'X,\nu^2)$ is normally distributed with mean $W_{\tau}' X$ and variance $\nu^2$.
The task distribution $P_T$ defines a distribution over the true model parameters $\overline{W}$ as $\overline{W} \sim P_T=\Nscr(\mu_w,\bar{\nu}^2 I_2)$ with mean vector $\mu_w \in \Real^2$, covariance parameter $\bar{\nu}^2$ and $I_d$ denoting a $d \times d$ identity matrix. It can be verified that the task environment is $\epsilon$-KL related with $\epsilon=m\bar{\nu}^2/\nu^2$, while the upper bound $\epsilon$ for  JS-related task environment can be estimated from data samples.
The loss accrued by a model parameter $w \in \Real^2$ on a  test data $z \in \Zscr$
is measured using the loss function $l(w,z)=\min\{(w'x-y)^2,c^2\}$
where $c>0$ is some positive constant. Note that the loss function is bounded, and have $\sigma^2=\delta_{\tau}^2=c^4/4$.
 
For any per-task data set $S=(Z_{1},\hdots,Z_{m})$ with $(Z_j=(X_j,Y_j)$, the base-learner selects the minimizer of the following ridge regression problem
\vspace{-0.2cm}
%$\hat{L}_t(w|S)=\frac{1}{m}\sum_{j=1}^m (w^TX_j-Y_j)^2)$, and outputs the minimizer,
\begin{align}
W^{*}=\arg \min_{w \in \Real^{2}} \frac{1}{m}\sum_{j=1}^m (w'X_{j}-Y_{j})^2+\frac{\lambda}{2} ||w-u||^2_2, \label{eq:baselearner_optimization}
\vspace{-0.1cm}
\end{align}
where $\lambda>0$ is a fixed regularization parameter and $u$ is a hyperparameter bias vector. The optimal value can be computed in closed form as $W^{*}=(2\Xbf'\Xbf/m+\lambda I_2)^{-1}(2\Xbf' \Ybf/m+\lambda u)$ where $\Xbf=[X_{1} \hdots X_{m}]'$ and $\Ybf=[Y_{1} \hdots Y_{m}]'$.
 Note that for the sake of tractability of the optimization problem in \eqref{eq:baselearner_optimization},  the base-learner adopts the squared empirical loss $\sum_{j=1}^m (w'X_{j}-Y_{j})^2/m$ instead of $L_t(w|S)=1/m \sum_{j=1}^m l(w,Z_j)$ as considered in \cite{denevi2020advantage}. This is possible since the bounds in \eqref{eq:epsilon-KLrelatedness_avggap} and \eqref{eq:delta-JSrelatedness_avggap} hold for any arbitrary base-learners and meta-learners.
 
Denoting as $W^{*}_i$ the minimizer in \eqref{eq:baselearner_optimization} for meta-training task $T_i$, the meta-learner selects the bias vector $u$ as the minimizer
 \begin{align}
u^{*}=\arg \min_{u \in \Real^2} \frac{1}{N} \sum_{i=1}^N \frac{1}{m}\sum_{j=1}^m ((W^{* }_i)' X_{i,j}-Y_{i,j})^2 \label{eq:metalearner_optimization}
 \end{align} where $(X_{i,j},Y_{i,j})$ denote the $j$th data sample of the $i$th task.
 % However, $\Delta \Lscr^{\avg}$ and the CMI's $I(U;S_i|T_{1:N})$ and $I(W;Z_j|\tau,T_{1:N})$ cannot be computed in closed form. While $\Delta \Lscr^{\avg}$ can be evauated numerically, we estimate the CMIs by generating sufficient samples and using the k-nearest neighbour estimation method \cite{kraskov2004estimating}. 

Figure~\ref{fig:exp5} compares the average absolute meta-generalization gap $|\overline{\Delta \Lscr}|^{\avg}$ with  absolute average meta-generalizaton gap  $|\overline{\Delta \Lscr}^{\avg}|$ studied in \cite{jose2020information}, and the upper bounds in \eqref{eq:epsilon-KLrelatedness_avggap} and \eqref{eq:delta-JSrelatedness_avggap} with the ITMI-based upper bound on $|\overline{\Delta \Lscr}^{\avg}|$ in \cite{jose2020information}, as a function of the covariance parameter $\bar{\nu}^2$ of the task environment. All the quantities are numerically evaluated. It is observed that performance metrics $|\overline{\Delta \Lscr}|^{\avg}$ and $|\overline{\Delta \Lscr}^{\avg}|$ have a distinctly different behavior as the task environment variance $\bar{\nu}^2$, and thus task dissimilarity, increases. In particular, the average meta-generalization loss $|\overline{\Delta \Lscr}^{\avg}|$ appears to be largely insensitive to task dissimilarity, as also predicted by the ITMI bound. In contrast, the metric $|\overline{\Delta \Lscr}|^{\avg}$ studied here reveals the role of task similarity, as captured by the bounds derived in this paper.

\bibliographystyle{IEEEtran}
\bibliography{ref}
\appendices
\section{Proof of Theorem~\ref{thm:generalbound}} \label{app:proof}
To obtain upper bounds on $|\overline{\Delta \Lscr}|^{\avg}=\Ebb_{P_T}\Ebb_{P_{T_{1:N}}}|\overline{\Delta \Lscr}(T,T_{1:N})|$, we bound $|\overline{\Delta \Lscr}(T,T_{1:N})|$ conditioned on test task $T=\tau$ and training tasks $T_{1:N}=\tau_{1:N}$, and then apply Jensen's inequality. Throughout this Appendix, we use $P_{\cdot|\tau,\tau_{1:N}}$ to denote the distribution $P_{\cdot|T=\tau,T_{1:N}=\tau_{1:N}}$ for notational convenience.  We have the following lemma.
\begin{lemma}\label{lem:KLrelatedness}
Under Assumption~\ref{assum:1}, the following bound holds for meta-test task $\tau$ and meta-training tasks $\tau_{1:N}$
\begin{align}
    &|\overline{\Delta \Lscr}(\tau,\tau_{1:N})| \leq  \frac{1}{N} \sum_{i=1}^N \biggl(\sqrt{2 \sigma^2 D_{\KL}(\PZmTi||R_{\Zm_i|\tau,\tau_{1:N}})}+\non \\& \sqrt{2 \sigma^2 \bigl(I(U;\Zm_i|\tau_{1:N})+D_{\KL}(\PZmti||R_{\Zm_i|\tau,\tau_{1:N}}) \bigr)} \biggr)\non \\& +
\frac{1}{m} \sum_{i=1}^m \sqrt{2 \delta_{\tau}^2 I(W;Z_j|\tau,\tau_{1:N})}. \label{eq:conditionedbound}
\end{align}
\end{lemma}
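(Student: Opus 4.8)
The plan is to bound the conditional quantity $|\overline{\Delta \Lscr}(\tau,\tau_{1:N})|$ by splitting the meta-generalization gap according to the decomposition \eqref{eq:decomposition} into a \emph{within-task} term $\Lscr_g(U|\tau)-\Lscr_{g,t}(U|\tau)$ and an \emph{environment-level} term $\Lscr_{g,t}(U|\tau)-\Lscr_t(U|\mset)$. Writing $\overline{\Delta \Lscr}(\tau,\tau_{1:N})=\Ebb_{P_{\mset|\tau_{1:N}}P_{U|\mset}}[\Delta \Lscr(U|\mset,\tau)]$ and applying the triangle inequality, it suffices to bound the absolute expectation of each term separately. The single recurring tool is the change-of-measure estimate: if $f$ is $\rho^2$-sub-Gaussian under a reference law $Q$, then $|\Ebb_P[f]-\Ebb_Q[f]|\le \sqrt{2\rho^2 D_{\KL}(P\|Q)}$, together with its mutual-information specialization obtained by taking $Q$ to be a product of marginals (cf. \cite{xu2017information,bu2019tightening}). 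Assumption~\ref{assum:1} is precisely what supplies the sub-Gaussianity needed on each side of every comparison.

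For the within-task term, I expand $\Lscr_g(U|\tau)-\Lscr_{g,t}(U|\tau)=\Ebb_{\PzTt}\Ebb_{P_{W|\Zm,U}}[\Ebb_{P_{Z|T=\tau}}[l(W,Z)]-\tfrac{1}{m}\sum_{j=1}^m l(W,Z_j)]$ and treat each coordinate $j$ separately. Since a fresh test point $Z$ is independent of $W$, the $j$th contribution is exactly a difference between the joint law of $(W,Z_j)$ and the product of its marginals; invoking Assumption~\ref{assum:1}$(a)$, under which $l(w,Z)$ is $\delta_{\tau}^2$-sub-Gaussian for $Z\sim P_{Z|T=\tau}$, the individual-sample mutual-information bound yields $\tfrac{1}{m}\sum_{j=1}^m\sqrt{2\delta_{\tau}^2 I(W;Z_j|\tau,\tau_{1:N})}$, which is the last line of \eqref{eq:conditionedbound}.

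The environment-level term is the crux of the argument, and is where the auxiliary distribution $R_{\Zm_i|\tau,\tau_{1:N}}$ enters. Using $\Lscr_{g,t}(u|\tau)=\Ebb_{\PZmTi}[L_t(u|\Zm_i)]$ and $\Lscr_t(u|\mset)=\tfrac1N\sum_{i=1}^N L_t(u|\Zm_i)$, I handle each $i$ by inserting the reference expectation $\Ebb_{P_{U|\tau_{1:N}}\otimes R_{\Zm_i|\tau,\tau_{1:N}}}[L_t(U|\Zm_i)]$, where $P_{U|\tau_{1:N}}$ is the conditional law of $U$ given $\tau_{1:N}$, and splitting the $i$th difference into two pieces. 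In the first piece, $U$ keeps its conditional marginal and only the law of $\Zm_i$ changes from $\PZmTi$ to $R_{\Zm_i|\tau,\tau_{1:N}}$; since Assumption~\ref{assum:1}$(b)$ makes $L_t(u|\Zm_i)$ $\sigma^2$-sub-Gaussian under $R_{\Zm_i|\tau,\tau_{1:N}}$ for each fixed $u$, this piece is at most $\sqrt{2\sigma^2 D_{\KL}(\PZmTi\|R_{\Zm_i|\tau,\tau_{1:N}})}$, uniformly in $u$. In the second piece I compare the true joint law of $(U,\Zm_i)$ against the product $P_{U|\tau_{1:N}}\otimes R_{\Zm_i|\tau,\tau_{1:N}}$; the key computation is the chain-rule identity $D_{\KL}(P_{U,\Zm_i|\tau_{1:N}}\|P_{U|\tau_{1:N}}\otimes R_{\Zm_i|\tau,\tau_{1:N}})=I(U;\Zm_i|\tau_{1:N})+D_{\KL}(\PZmti\|R_{\Zm_i|\tau,\tau_{1:N}})$, where I use that $\Zm_i$ conditioned on $\tau_{1:N}$ depends only on $\tau_i$. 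The change-of-measure bound with this KL then gives $\sqrt{2\sigma^2(I(U;\Zm_i|\tau_{1:N})+D_{\KL}(\PZmti\|R_{\Zm_i|\tau,\tau_{1:N}}))}$, and averaging over $i$ with weight $\tfrac1N$ produces the first two lines of \eqref{eq:conditionedbound}.

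The main obstacle I anticipate is the bookkeeping in this last step: one must choose the intermediate reference law so that the verifiable sub-Gaussianity hypothesis of Assumption~\ref{assum:1}$(b)$ (stated for data drawn from $R_{\Zm_i|\tau,\tau_{1:N}}$) applies to both comparisons, and then cleanly disentangle the dependence of $U$ on $\Zm_i$ from the mismatch between the training-task law $\PZmti$ and the test-task law $\PZmTi$. The chain-rule split is exactly what separates these two effects into the mutual information $I(U;\Zm_i|\tau_{1:N})$ and the KL divergence against the training-task law. Combining the within-task and environment-level bounds through the triangle inequality establishes \eqref{eq:conditionedbound}; Theorem~\ref{thm:generalbound} then follows by averaging over $P_T\,P_{T_{1:N}}$ and using Jensen's inequality to move the expectations inside the concave square roots of the environment terms while leaving the task-dependent constant $\delta_{T'}^2$ outside in the definition of $B$.
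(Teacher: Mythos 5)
Your proposal is correct and follows essentially the same route as the paper: the same decomposition into within-task and environment-level terms, the same insertion of the intermediate reference expectation under $P_{U|\tau_{1:N}}\otimes R_{\Zm_i|\tau,\tau_{1:N}}$, and the same chain-rule split of the resulting KL divergence into $I(U;\Zm_i|\tau_{1:N})+D_{\KL}(\PZmti\|R_{\Zm_i|\tau,\tau_{1:N}})$. The only cosmetic difference is that you invoke the sub-Gaussian change-of-measure bound directly, whereas the paper derives it by writing out the exponential inequalities, applying Jensen's inequality, and optimizing over $\lambda$.
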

In order to prove Lemma~\ref{lem:KLrelatedness}, we need the following lemma, whose proof is reported later in this appendix. For ensuring that all the  KL divergences and mutual informations defined in \eqref{eq:conditionedbound} are finite and well-defined, and there are no measurability issues while applying change of measure (to be introduced later), we also assume that $P_{S_i|\tau}$ and $R_{S_i|\tau,\tau_{1:N}}$ share the same support for all $\tau, \tau_{1:N} \in \mathcal{T}$. Similarly, we also assume that the joint distribution $P_{U,S_i|\tau_{1:N}}$ and the product of the marginals $P_{U|\tau_{1:N}} P_{S_i|\tau_{i}}$ have the same support for all tasks $\tau_{1:N} \in \mathcal{T}$. Similar assumption also holds for $P_{W,Z|\tau,\tau_{1:N}}$ and $P_{W|\tau,\tau_{1:N}}P_{Z|\tau}$ for all tasks $\tau,\tau_{1:N} \in \mathcal{T}$.
 \begin{lemma}\label{lem:expinequality_JS}
Under Assumption~\ref{assum:1}$(a)$, we have the following exponential inequalities on the per-task training loss $L_t(U|\Zm_i)$ which hold for all $\lambda \in \Real$. For each $i \in \{1,\hdots, N\}$, we have that
\begin{align}
&\Ebb_{P_{U|\tau_{1:N}}\PZmTi}\biggl[\exp \biggl(\lambda (L_t(U|\Zm_i)-\Ebb_{R_{\Zm_i|\tau,\tau_{1:N}}}[L_t(U|\Zm_i)])\non \\& \qquad -\log \frac{\PZmTi}{R_{\Zm_i|\tau,\tau_{1:N}}} -\frac{\lambda^2\sigma^2}{2} \biggr) \biggr]\leq 1 \label{eq:subgaussian_envt_3_JS},
\end{align}
and 
\begin{align}
   & \Ebb_{P_{U,\Zm_i|\tau_{1:N}}}\biggl[\exp \biggl(\lambda (L_t(U|\Zm_i)-\Ebb_{R_{\Zm_i|\tau,\tau_{1:N}}}[L_t(U|\Zm_i)])\non \\& -\log \frac{ \PZmti}{R_{\Zm_i|\tau,\tau_{1:N}}} -\imath(U,\Zm_i|\tau_{1:N})-\frac{\lambda^2\sigma^2}{2} \biggr) \biggr]\leq 1 \label{eq:subgaussian_envt_33_JS},
\end{align}
where $$\imath(U,\Zm_i|\tau_{1:N})=\log \frac{P_{U,\Zm_i|\tau_{1:N}}}{P_{U|\tau_{1:N}}\PZmti}$$ is the information density.
Similarly, we also have that
\begin{align}
&\Ebb_{P_{W,Z|\tau,\tau_{1:N}}}\biggl[\exp \biggl(\lambda(l(W,Z)-\Ebb_{P_{Z|\tau}}[l(W,Z)] )\biggr)\non \\&-\frac{\lambda^2 \delta_{\tau}^2}{2} -\imath(W,Z|\tau,\tau_{1:N}) \biggr]  \leq 1, \label{eq:expinequality_task}
\end{align}
where $$\imath(W,Z|\tau,\tau_{1:N})=\log \frac{P_{W,Z|\tau,\tau_{1:N}}}{P_{W|\tau,\tau_{1:N}}P_{Z|\tau}}$$ is the information density.
\end{lemma}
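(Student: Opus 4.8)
The plan is to establish all three exponential inequalities through a single common mechanism: invoke the appropriate sub-Gaussian property from Assumption~\ref{assum:1} to obtain a standard exponential bound under the distribution for which sub-Gaussianity is assumed, and then transport that bound to the target distribution by a change of measure. Each such change of measure is exactly what produces the log-likelihood-ratio and information-density terms sitting in the exponents of \eqref{eq:subgaussian_envt_3_JS}--\eqref{eq:expinequality_task}.

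For \eqref{eq:subgaussian_envt_3_JS}, I would begin from Assumption~\ref{assum:1}$(b)$: for every fixed $u$, $L_t(u|\Zm_i)$ is $\sigma^2$-sub-Gaussian when $\Zm_i \sim R_{\Zm_i|\tau,\tau_{1:N}}$, so the definition of sub-Gaussianity yields, for all $\lambda \in \Real$,
\[
\Ebb_{R_{\Zm_i|\tau,\tau_{1:N}}}\Bigl[\exp\Bigl(\lambda\bigl(L_t(u|\Zm_i)-\Ebb_{R_{\Zm_i|\tau,\tau_{1:N}}}[L_t(u|\Zm_i)]\bigr)-\tfrac{\lambda^2\sigma^2}{2}\Bigr)\Bigr]\leq 1 .
\]
I would then change measure from $R_{\Zm_i|\tau,\tau_{1:N}}$ to $\PZmTi$ using the identity $\Ebb_{R_{\Zm_i|\tau,\tau_{1:N}}}[\,\cdot\,]=\Ebb_{\PZmTi}[\exp(-\log(\PZmTi/R_{\Zm_i|\tau,\tau_{1:N}}))\,\cdot\,]$, which folds the log-ratio into the exponent. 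Since the inequality then holds for each fixed $u$ and $U$ is independent of $\Zm_i$ under $P_{U|\tau_{1:N}}\PZmTi$, applying $\Ebb_{P_{U|\tau_{1:N}}}$ to both sides gives \eqref{eq:subgaussian_envt_3_JS}.

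Inequality \eqref{eq:subgaussian_envt_33_JS} follows the same skeleton, but the target measure is the joint $P_{U,\Zm_i|\tau_{1:N}}$, so one extra step is needed. I would rewrite the desired expectation over $P_{U,\Zm_i|\tau_{1:N}}$ and use $\exp(-\imath(U,\Zm_i|\tau_{1:N}))=P_{U|\tau_{1:N}}\PZmti/P_{U,\Zm_i|\tau_{1:N}}$ to pass to an expectation over the product $P_{U|\tau_{1:N}}\PZmti$; a further change of measure via the $\log(\PZmti/R_{\Zm_i|\tau,\tau_{1:N}})$ term then restores the $\Zm_i$-marginal to $R_{\Zm_i|\tau,\tau_{1:N}}$, at which point the same fixed-$u$ sub-Gaussian bound, followed by integrating out $U\sim P_{U|\tau_{1:N}}$, closes the argument. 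The loss-level inequality \eqref{eq:expinequality_task} is completely analogous, now invoking Assumption~\ref{assum:1}$(a)$ (that $l(w,Z)$ is $\delta_\tau^2$-sub-Gaussian under $Z\sim P_{Z|\tau}$) in place of \ref{assum:1}$(b)$, together with a single change of measure from the joint $P_{W,Z|\tau,\tau_{1:N}}$ to the product $P_{W|\tau,\tau_{1:N}}P_{Z|\tau}$, which generates the information density $\imath(W,Z|\tau,\tau_{1:N})$.

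Each individual step is routine; the delicate point is ensuring that the changes of measure are legitimate. This is precisely the role of the support assumptions stated just before the lemma, namely that $P_{S_i|\tau}$ and $R_{S_i|\tau,\tau_{1:N}}$ share support, that $P_{U,S_i|\tau_{1:N}}$ and $P_{U|\tau_{1:N}}P_{S_i|\tau_i}$ share support, and likewise for $P_{W,Z|\tau,\tau_{1:N}}$ versus $P_{W|\tau,\tau_{1:N}}P_{Z|\tau}$. These guarantee that the relevant Radon--Nikodym derivatives, and hence the information densities, are finite almost everywhere, so that the exponentiated ratios are well-defined and the interchange of expectation order (Tonelli/Fubini) used in each reduction is justified.
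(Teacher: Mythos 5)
Your proposal is correct and follows essentially the same route as the paper: establish the sub-Gaussian exponential bound under the reference measure for fixed $u$ (resp.\ fixed $w$), transport it to the target measure via changes of measure that introduce the log-likelihood-ratio and information-density terms, and average over $U\sim P_{U|\tau_{1:N}}$ (resp.\ $W\sim P_{W|\tau,\tau_{1:N}}$), with the stated support conditions justifying the Radon--Nikodym steps. The only divergence is that you correctly invoke Assumption~\ref{assum:1}$(b)$ for the two loss-level inequalities \eqref{eq:subgaussian_envt_3_JS}--\eqref{eq:subgaussian_envt_33_JS}, whereas the paper's statement and proof cite Assumption~\ref{assum:1}$(a)$ --- an apparent typo, since the inequality actually used is precisely the content of part~$(b)$.
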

\begin{proofarg}[of Lemma~\ref{lem:KLrelatedness}]
As discussed in Section~\ref{sec:mainresults}, the idea is to separately bound the two differences in \eqref{eq:decomposition}. We first bound the second difference that captures the environment-level generalization gap. The average of this difference can be equivalently written as
\begin{align}
&\Ebb_{P_{\mset|\tau_{1:N}}P_{U|\mset}}[\Lscr_{g,t}(U|\tau)-\Lscr_t(U|\mset)] \non \\
&=\frac{1}{N} \sum_{i=1}^N \biggl[ \Ebb_{P_{U|\tau_{1:N}}} \Ebb_{\PZmTi}[L_t(U|\Zm_i)]\non \\&-\Ebb_{P_{U,\Zm_i|\tau_{1:N}}}[L_t(U|\Zm_i)]\biggr], \label{eq:1}
\end{align}
where note that the first average is with respect to $P_{U|\tau_{1:N}}$, the marginal of the joint distribution $P_{U,\mset|\tau_{1:N}}=P_{\mset|\tau_{1:N}}P_{U|\mset}$, and $\PZmTi$, the data distribution with respect to the test task $\tau$. On the other hand, the second average in \eqref{eq:1} is with respect to the joint distribution $P_{U,\Zm_i|\tau_{1:N}}$ of the hyperparameter $U$ and $i$th training data $\Zm_i$, which is obtained by marginalizing $P_{U,\mset|\tau_{1:N}}$.

To bound the difference $\Ebb_{P_{U|\tau_{1:N}}} \Ebb_{\PZmTi}[L_t(U|\Zm_i)]-\Ebb_{P_{U,\Zm_i|\tau_{1:N}}}[L_t(U|\Zm_i)]$, we resort to the exponential inequalities in Lemma~\ref{lem:expinequality_JS}. Applying Jensen's inequality on \eqref{eq:subgaussian_envt_3_JS} with $\lambda=\lambda_1>0$ and choosing $\lambda_1=\sqrt{2 D_{\KL}(\PZmTi||R_{\Zm_i|\tau,\tau_{1:N})})}/\sigma$ then yields the following inequality
\begin{align}
    &\Ebb_{P_{U|\tau_{1:N}}\PZmTi}[L_t(U|\Zm_i)]-\Ebb_{P_{U|\tau_{1:N}}}\Ebb_{R_{\Zm_i|\tau,\tau_{1:N}}}[L_t(U|\Zm_i)] \non \\ & \leq \sqrt{2 \sigma^2  D_{\KL}(\PZmTi||R_{\Zm_i|\tau,\tau_{1:N}})} \label{eq:subgaussian_envt_4_JS} .
\end{align}
Similarly, applying Jensen's inequality on \eqref{eq:subgaussian_envt_33_JS} with $\lambda=-\lambda_2$, $\lambda_2>0$ and choosing $\lambda_2=\sqrt{2 (D_{\KL}(\PZmti||R_{\Zm_i|\tau,\tau_{1:N})})+I(U;\Zm_i|\tau_{1:N})}/\sigma$  yields the following inequality
\begin{align}
&\Ebb_{P_{U|\tau_{1:N}}}\Ebb_{R_{\Zm_i|\tau,\tau_{1:N}}}[L_t(U|\Zm_i)]-\Ebb_{P_{U,\Zm_i|\tau_{1:N}}}[L_t(U|\Zm_i)] \non \\& \leq \sqrt{2 \sigma^2  \Bigl(D_{\KL}(\PZmti||R_{\Zm_i|\tau,\tau_{1:N}})+I(U;\Zm_i|\tau_{1:N})\Bigr)} \label{eq:subgaussian_envt_44_JS}.
\end{align}
Adding \eqref{eq:subgaussian_envt_4_JS} and \eqref{eq:subgaussian_envt_44_JS} then yields that
\begin{align}
&\Ebb_{P_{U|\tau_{1:N}}\PZmTi}[L_t(U|\Zm_i)]-\Ebb_{P_{U,\Zm_i|\tau_{1:N}}}[L_t(U|\Zm_i)] \non \\
& \leq \sqrt{2 \sigma^2  D_{\KL}(\PZmTi||R_{\Zm_i|\tau,\tau_{1:N}})}+\non \\&\sqrt{2 \sigma^2  \Bigl(D_{\KL}(\PZmti||R_{\Zm_i|\tau,\tau_{1:N}})+I(U;\Zm_i|\tau_{1:N})\Bigr)} .\label{eq:3}
\end{align}
Similarly, choosing $\lambda=\lambda_3>0$ in \eqref{eq:subgaussian_envt_33_JS} and $\lambda=-\lambda_4$, $\lambda_4>0$, in \eqref{eq:subgaussian_envt_3_JS}, applying Jensen's inequality, optimizing over $\lambda_3,\lambda_4$, and finally adding the resultant inequalities results in the same upper bound on $\Ebb_{P_{U,\Zm_i|\tau_{1:N}}}[L_t(U|\Zm_i)]-\Ebb_{P_{U|\tau_{1:N}}\PZmTi}[L_t(U|\Zm_i)]$  as in \eqref{eq:3}. Together, we thus have that
\begin{align}
&\Bigl|\Ebb_{P_{U|\tau_{1:N}}\PZmTi}[L_t(U|\Zm_i)]-\Ebb_{P_{U,\Zm_i|\tau_{1:N}}}[L_t(U|\Zm_i)] \Bigr|\non \\
& \leq \sqrt{2 \sigma^2  D_{\KL}(\PZmTi||R_{\Zm_i|\tau,\tau_{1:N}})}+\non \\&\sqrt{2 \sigma^2  \Bigl(D_{\KL}(\PZmti||R_{\Zm_i|\tau,\tau_{1:N}})+I(U;\Zm_i|\tau_{1:N})\Bigr)} .\label{eq:4}
\end{align}
Substituting this in \eqref{eq:1} yields an upper bound on the environment-level generalization gap.

%Applying Jensen's inequality and subsequently optimizing over $\lambda$ with $\lambda= \sqrt{I(U;\Zm_i|T_{1:N}=\tau_{1:N})+ D_{KL}(\Pzti||\Pzt)}/\sqrt{0.5 \sigma^2}$ then yields that
%\begin{align}
%&\Bigl| \Ebb_{P_{U|\tau_{1:N}}} \Ebb_{\Pzt}[L_t(U|\Zm)]-\Ebb_{P_{U,\Zm_i|\tau_{1:N}}}[L_t(U|\Zm_i)]\Bigr | \non \\
%& \leq \sqrt{2\sigma^2(I(U;\Zm_i|T_{1:N}=\tau_{1:N})+ D_{KL}(\Pzti||\Pzt)}\non \\
%&=\sqrt{2\sigma^2(I(U;\Zm_i|T_{1:N}=\tau_{1:N})+ m D_{KL}(P_{Z|\tau_i}||P_{Z|\tau})}.
%\end{align}
% Substituting this in \eqref{eq:1}, then yields the first term in \eqref{eq:KLrelatedness_1}.

We now bound the average within-task generalization gap, \ie, the first difference in \eqref{eq:decomposition}. The average of this difference can be equivalently written as
\begin{align}
&\Ebb_{P_{\mset|\tau_{1:N}}P_{U|\mset}}[\Lscr_{g}(U|\tau)-\Lscr_{g,t}(U|\tau)] \non \\
&=\Ebb_{\Pzt}\Ebb_{P_{U|\tau_{1:N}}P_{W|\Zm,U}}[L_g(W|\tau)-L_t(W|\Zm)]\non \\
&=\Ebb_{\Pzt}\Ebb_{P_{W|\Zm,\tau_{1:N}}}[L_g(W|\tau)-L_t(W|\Zm)]\non \\
&=\frac{1}{m} \sum_{j=1}^m \biggl[\Ebb_{P_{W|\tau,\tau_{1:N} }}\Ebb_{P_{Z_j|\tau}}[l(W,Z_j)]\non \\&-\Ebb_{P_{W,Z_j|\tau,\tau_{1:N}}}[l(W,Z_j)]\biggr], \label{eq:2}
\end{align}
where $P_{W|\Zm,\tau_{1:N}}$ is obtained by marginalizing over $U$ of the joint distribution $P_{W|\Zm,U}P_{U|\tau_{1:N}}$ and $P_{W|\tau,\tau_{1:N}}$ denotes the marginal of $P_{W|\Zm,\tau_{1:N}}\Pzt$. To bound the difference $\Ebb_{P_{W|\tau,\tau_{1:N} }}\Ebb_{P_{Z_j|\tau}}[l(W,Z)]-\Ebb_{P_{W,Z_j|\tau,\tau_{1:N}}}[l(W,Z)]$, we resort to the exponential inequality \eqref{eq:expinequality_task}. Applying Jensen's inequality and optimizing over $\lambda$ with $\lambda= \sqrt{2I(W;Z_j|\tau,\tau_{1:N})}/ \delta_{\tau}$ yields the following bound
\begin{align}
&\Bigl | \Ebb_{P_{W|\tau,\tau_{1:N} }}\Ebb_{P_{Z_j|\tau}}[l(W,Z)]-\Ebb_{P_{W,Z_j|\tau,\tau_{1:N}}}[l(W,Z)] \Bigr |\non \\
& \leq \sqrt{2 \delta_{\tau}^2 I(W;Z_j|\tau,\tau_{1:N})}.
\end{align}
Substituting this in \eqref{eq:2} then yields the second term of \eqref{eq:conditionedbound}.
\end{proofarg}

\begin{proofarg}[of Lemma~\ref{lem:expinequality_JS}]
To obtain the exponential inequalities in \eqref{eq:subgaussian_envt_3_JS} and \eqref{eq:subgaussian_envt_33_JS}, we resort to Assumption~\ref{assum:1}$(a)$. This results in the following inequality for $i=1,\hdots, N$
\begin{align}
&\Ebb_{R_{\Zm_i|\tau,\tau_{1:N}}}\biggl[\exp \biggl(\lambda (L_t(u|\Zm_i)-\Ebb_{R_{\Zm_i|\tau,\tau_{1:N}}}[L_t(u|\Zm_i)]) \non \\& -\frac{\lambda^2\sigma^2}{2} \biggr) \biggr]\leq 1 \label{eq:subgaussian_envt_1_JS},
\end{align} which holds for all $u \in \Uscr$ and $\lambda \in \Real$. 
%This in turn implies the following inequality
%\begin{align}
%&\Ebb_{R_{\Zm_i|\tau,\tau_{1:N}}}\biggl[\Ibb_{\Escr}\exp \biggl(\lambda (L_t(u|\Zm_i)-\Ebb_{R_{\Zm_i|\tau,\tau_{1:N}}}[L_t(u|\Zm_i)]) \non \\& -\frac{\lambda^2\sigma^2}{2} \biggr) \biggr]\leq 1 \label{eq:subgaussian_envt_11_JS},
%\end{align} where $\Escr=\supp(P_{S_i|\tau})$.
 We now perform a change of measure from $R_{\Zm_i|\tau,\tau_{1:N}}$ to $P_{S_i|\tau}$ by using the approach adopted in \cite{hellstrom2020generalization}, \cite[Prop.~17]{polyanskiy2014lecture}. 
This results in the inequality
\begin{align}
    &\Ebb_{\PZmTi}\biggl[\exp \biggl(\lambda (L_t(u|\Zm_i)-\Ebb_{R_{\Zm_i|\tau,\tau_{1:N}}}[L_t(u|\Zm_i)])\non \\& \qquad -\log \frac{\PZmTi}{R_{\Zm_i|\tau,\tau_{1:N}}} -\frac{\lambda^2\sigma^2}{2} \biggr) \biggr]\leq 1 \label{eq:subgaussian_envt_2_JS},
\end{align}where recall that $P_{S_i|\tau}$ and $R_{S_i|\tau,\tau_{1:N}}$ are well-defined probability density function (pdf) or probability mass function (pmf).
Similarly, performing a change of measure from $R_{\Zm_i|\tau,\tau_{1:N}}$ to $\PZmti$ results in the inequality
\begin{align}
    &\Ebb_{\PZmti}\biggl[\exp \biggl(\lambda (L_t(u|\Zm_i)-\Ebb_{R_{\Zm_i|\tau,\tau_{1:N}}}[L_t(u|\Zm_i)])\non \\& \qquad -\log \frac{\PZmti}{R_{\Zm_i|\tau,\tau_{1:N}}} -\frac{\lambda^2\sigma^2}{2} \biggr) \biggr]\leq 1 \label{eq:subgaussian_envt_22_JS}.
\end{align}
Averaging \eqref{eq:subgaussian_envt_2_JS} over $U \sim P_{U|\tau_{1:N}}$ yields
\eqref{eq:subgaussian_envt_3_JS}.
Similarly, averaging \eqref{eq:subgaussian_envt_22_JS} over $U \sim P_{U|\tau_{1:N}}$ and performing a change of measure from $P_{U|\tau_{1:N}}\PZmti$ to $P_{U,\Zm_i|\tau_{1:N}}$ yields the inequality \eqref{eq:subgaussian_envt_33_JS}.

To obtain task-level exponential inequality, we have from Assumption~\ref{assum:1} the inequality
\begin{align}
\Ebb_{P_{Z|\tau}}\biggl[\exp \biggl(\lambda(l(w,Z)-\Ebb_{P_{Z|\tau}}[l(w,Z)] )\biggr)-\frac{\lambda^2 \delta_{\tau}^2}{2} \biggr] & \leq 1,
\end{align}
which holds for all $\lambda \in \Real$ and $w \in \Wscr$. Averaging both sides of the inequality over $W \sim P_{W|\tau,\tau_{1:N}}$ and subsequently applying a change of measure from $P_{Z|\tau}P_{W|\tau,\tau_{1:N}}$ to $P_{W,Z|\tau,\tau_{1:N}}$ yields the exponential inequality in \eqref{eq:expinequality_task}.
\end{proofarg}

\end{document}